%% file: main.tex
\documentclass{article}

\usepackage[preprint]{neurips_2026}

\input{preamble}            

\title{Exact Is Easier: Credit Assignment\\
  for Cooperative LLM Agents}

\author{%
  Yanjun Chen\thanks{Correspondence: \texttt{yan-jun.chen@connect.polyu.hk}} \\
  Eastern Institute of Technology \& \\
  The Hong Kong Polytechnic University
  \And
  Yirong Sun \\
  Eastern Institute of Technology
  \And
  Hanlin Wang \\
  The Hong Kong Polytechnic University
  \And
  Jinghan Wang \\
  Harbin Institute of Technology
  \And
  Xinming Zhang \\
  Eastern Institute of Technology
  \And
  Xiaoyu Shen \\
  Eastern Institute of Technology
  \And
  Wenjie Li \\
  The Hong Kong Polytechnic University
  \And
  Wei Zhang\thanks{Corresponding author: \texttt{zhw@eitech.edu.cn}} \\
  Eastern Institute of Technology
}

\begin{document}
\maketitle

\begin{abstract}
\input{sections/abstract}
\end{abstract}

\input{sections/intro}


\input{sections/background}


\input{sections/method}


\input{sections/analysis}

\input{sections/experiments}


\input{sections/related}


\input{sections/discussion}


\input{sections/conclusion}

\begin{ack}
This work was supported by the 2035 Key Research and Development Program
of Ningbo City (Grant No.~2024Z127) and the Research Grants Council of
Hong Kong (PolyU/15209724).
\end{ack}

\bibliographystyle{plainnat}
\bibliography{refs}

\appendix


\input{appendix/theory}


\input{appendix/supplementary}

\clearpage


\end{document}

%% file: preamble.tex
\usepackage[utf8]{inputenc}
\usepackage[T1]{fontenc}
\usepackage{url}
\usepackage{xurl}                    
\usepackage{booktabs}                
\usepackage{amsfonts}
\usepackage{nicefrac}
\usepackage{microtype}
\usepackage{xcolor}
\usepackage{amsmath,amssymb,amsthm}
\usepackage{mathtools}

\usepackage{graphicx}
\usepackage{subcaption}
\usepackage{placeins}                 

\usepackage{array}                   
\usepackage{tabularx}                
\usepackage{adjustbox}               
\usepackage{multirow}                
\usepackage{makecell}                
\usepackage{threeparttable}          

\usepackage{tcolorbox}
\tcbuselibrary{breakable,skins}

\usepackage{enumitem}

\usepackage{algorithm}
\usepackage{algpseudocode}           
\floatstyle{ruled}                   
\restylefloat{algorithm}

\usepackage{hyperref}

\usepackage[capitalize,noabbrev]{cleveref}

\usepackage{xspace}

\graphicspath{
  {figures/}
}

\hypersetup{
  colorlinks = true,
  linkcolor  = blue!70!black,
  citecolor  = green!50!black,
  urlcolor   = blue!70!black,
}

\newcommand{\E}{\mathbb{E}}

\newcommand{\nagents}{N}              
\newcommand{\fanout}{n}               
\newcommand{\ndecisions}{K}           
\newcommand{\huvar}{h_u}              
\newcommand{\action}{a}               
\newcommand{\traj}{\tau}              
\newcommand{\Rterm}{R(\traj)}         

\newcommand{\pib}{\pi_b}             
\newcommand{\pitheta}{\pi_\theta}    
\newcommand{\Db}{D_b}                
\newcommand{\checkpoint}{\rho_u}     

\newcommand{\bloo}[1][j]{b_{-#1}}    
\newcommand{\Ahat}{\hat{A}}           
\newcommand{\Astar}{A^*_{\Db}}       
\newcommand{\QDb}{Q^{\Db}}           
\newcommand{\VDb}{V^{\Db}}           

\newcommand{\influence}{I(J; Y \mid \huvar)} 

\newcommand{\cS}{\mathcal{S}}        
\newcommand{\cA}{\mathcal{A}}        
\newcommand{\cO}{\mathcal{O}}        
\newcommand{\cL}{\mathcal{L}}        

\newtheorem{theorem}{Theorem}

\newtheorem{proposition}[theorem]{Proposition}

\newtcolorbox{AppBox}[1]{%
  enhanced, breakable, sharp corners,
  boxrule=0pt, leftrule=2.5pt, rightrule=0pt, toprule=0pt, bottomrule=0pt,
  colframe=black!75, colback=black!2,
  left=8pt, right=8pt, top=8pt, bottom=8pt,
  before skip=14pt, after skip=14pt,
  fonttitle=\large\bfseries, coltitle=black,
  title=#1, colbacktitle=black!2,
  attach title to upper=\vspace{6pt}\par
}
\newtcolorbox{PromptBox}[1]{%
  enhanced, breakable, arc=3pt,
  boxrule=0.5pt, colframe=black!25, colback=black!1,
  left=10pt, right=10pt, top=8pt, bottom=8pt,
  before skip=12pt, after skip=12pt,
  fonttitle=\bfseries\sffamily, coltitle=black!80,
  title=#1, colbacktitle=black!1,
  attach title to upper=\vspace{6pt}\par
}
\newtcolorbox{CaseBox}[1]{%
  enhanced, breakable, sharp corners,
  boxrule=0pt, toprule=1.5pt, bottomrule=1.5pt,
  colframe=black!80, colback=white,
  left=6pt, right=6pt, top=10pt, bottom=10pt,
  before skip=16pt, after skip=16pt,
  fonttitle=\bfseries, coltitle=black,
  title=#1, colbacktitle=white,
  attach title to upper=\vspace{8pt}\par
}

\usepackage{pifont}

\newcolumntype{C}[1]{>{\centering\arraybackslash}p{#1}}
\newcolumntype{L}[1]{>{\raggedright\arraybackslash}p{#1}}
\newcolumntype{R}[1]{>{\raggedleft\arraybackslash}p{#1}}

%% file: sections/abstract.tex
Removing an agent from a cooperative team to measure its contribution seems natural, yet in multi-agent LLM systems this evaluation distorts the result it claims to measure. This failure is not isolated: learned critics, trajectory-level baselines, and agent-removal counterfactuals all inherit from standard multi-agent reinforcement learning a premise that exact counterfactual evaluation requires privileged environment access, and therefore approximate. In cooperative LLM systems, this premise is false. Interaction histories are deterministic functions of observable text with no hidden state, so any decision point can be restored exactly, making direct causal measurement possible without parametric approximation. C3 exploits this property by fixing the complete history at each decision point, sampling alternative actions under a frozen behavior policy, and computing unbiased per-decision advantages through a parameter-free leave-one-out baseline. Across six benchmarks spanning math reasoning and code generation, two model families, and two multi-agent topologies, C3 consistently outperforms all baselines; a controlled decomposition confirms gains originate from credit quality, not architecture, while checkpoint restoration reduces training token consumption. The exact solution proves simpler, cheaper, and more effective than all approximate alternatives. The same structural property that enables exact credit also enables exact verification: three independently computable diagnostics, credit fidelity, within-group variance, and inter-agent influence, constitute the first method-agnostic auditing tool for multi-agent LLM credit assignment. Our code is available at \url{https://github.com/EIT-EAST-Lab/C3}.

%% file: sections/intro.tex
\section{Introduction}
\label{sec:introduction}

Evaluating an agent's contribution to a team outcome seems to require observing the team without that agent. In cooperative multi-agent LLM systems~\citep{li2023camel,qian2024chatdev,wu2024autogen}, this intuition leads to a methodological trap: because each agent conditions on the complete interaction history, removing one agent changes what every downstream agent observes. The resulting performance difference conflates the removed agent's genuine contribution with every other agent's degradation under unfamiliar input. When the only training signal is a terminal scalar reward, this conflation is the fundamental bottleneck for training such systems.

Three families of methods address this bottleneck, all inheriting a premise from cooperative multi-agent reinforcement learning (RL): that exact counterfactual evaluation requires privileged environment access, necessitating approximation. Parametric critics~\citep{yu2022surprising,foerster2018counterfactual} approximate value functions but compound estimation error across long text histories. Trajectory-level methods~\citep{zhang2024cory,liu2025magrpo} assign identical credit to all decisions within an episode. Agent-removal counterfactuals~\citep{wan2025ccpo} compare outcomes with and without an agent, but removal alters remaining agents' observation distributions, introducing systematic bias that cannot be reduced by additional sampling~(\cref{sec:distshift}).

Cooperative LLM interaction enables a fundamentally different approach. The interaction history consists entirely of observable text with no hidden state; any decision context can be restored exactly from the text record (\cref{sec:posg}). This converts credit assignment from an estimation problem into a measurement problem. C3 exploits this deterministic-history property (\cref{fig:overview}): at each decision point, it fixes the complete text history, samples alternative actions from a frozen behavior policy, evaluates each via Monte Carlo rollout from the restored checkpoint, and computes unbiased per-decision advantages through a parameter-free leave-one-out (LOO) baseline (\cref{sec:method}; \cref{app:prop1}).

\begin{figure}[t]
  \centering
  \includegraphics[width=\textwidth]{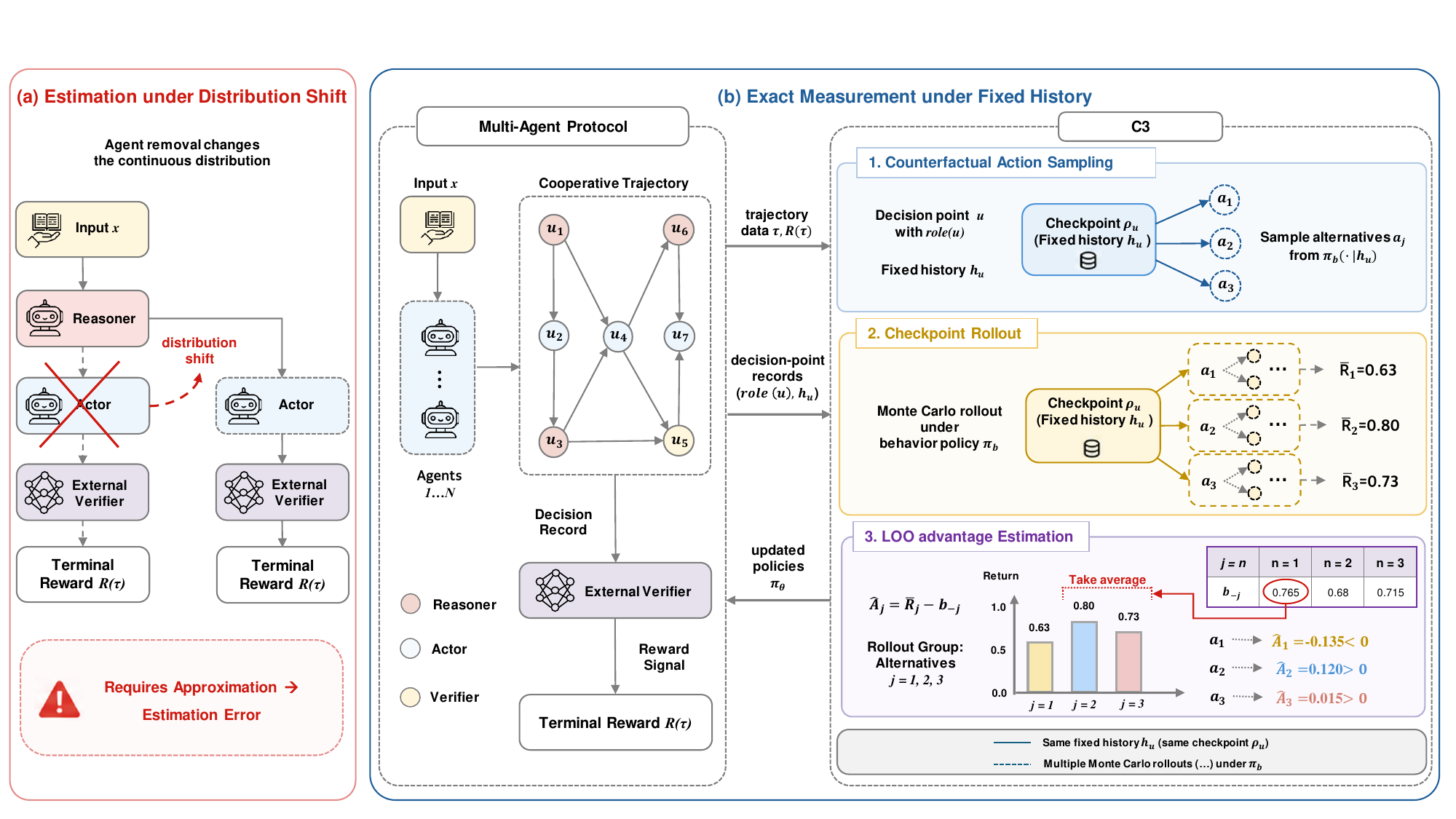}
  \caption{Agent removal vs.\ fixed-history intervention. \textbf{(a)}~Removal changes downstream observation distributions, introducing estimation error. \textbf{(b)}~C3 processes each decision point in three steps: counterfactual action sampling under fixed history, checkpoint rollout, and LOO advantage estimation. The worked example shows how LOO baselines isolate per-action credit ($\hat{A}_1 < 0,\; \hat{A}_2 > 0,\; \hat{A}_3 > 0$); \cref{sec:distshift} shows how removal assigns zero credit to a correct decision while fixed-history evaluation recovers the signal.}
  \label{fig:overview}
\end{figure}

Across six benchmarks, two model families, and two- and three-agent topologies, C3 consistently outperforms existing methods under matched evaluation budgets, while reducing training tokens by 32\% through checkpoint restoration (\cref{sec:experiments}). A three-level decomposition isolates architecture from credit gains; that a parameter-free method outperforms all parametric alternatives is itself evidence that approximation was counterproductive.

The same structural property also enables exact \emph{verification} of credit quality, a capability absent where ground-truth advantages are intractable. Three diagnostic metrics, defined independently of task performance, verify that improvements stem from credit quality rather than architecture (\cref{sec:diagnostics}); the framework applies to any credit method given shared-history rollouts.

Our contributions are:
\begin{enumerate}[leftmargin=*,itemsep=1pt,topsep=2pt]
  \item We formalize counterfactual distribution shift in agent-removal evaluation and identify the deterministic-history property as the structural condition enabling exact credit assignment (\cref{sec:distshift}).
  \item We propose C3, which samples alternative actions under fixed history, evaluates each via checkpoint rollout, and computes unbiased advantages through a parameter-free LOO baseline (\cref{sec:method}).
  \item We introduce three method-agnostic diagnostic metrics for auditing credit quality independently of task performance: credit fidelity, within-group variance, and inter-agent influence (\cref{sec:diagnostics}).
  \item We validate C3 across six benchmarks, two model families, and two topologies, isolating architecture from credit gains and demonstrating 32\% training-token savings (\cref{sec:experiments}).
\end{enumerate}

%% file: sections/background.tex
\section{Problem Setting}
\label{sec:problem}

We address per-decision credit assignment in cooperative multi-agent LLM systems, where multiple language model agents contribute sequentially to a shared output and receive only a terminal scalar reward. We formalize this interaction as a cooperative partially observable stochastic game (POSG), characterize a systematic distribution shift in existing counterfactual evaluation methods, and identify a structural property that enables exact counterfactual evaluation. A summary of notation is provided in \cref{app:notation}.

\subsection{Cooperative POSG Formulation}
\label{sec:posg}

Consider a cooperative task in which multiple LLM agents contribute sequentially: each agent conditions on the full interaction history, appends its own message, and an automated verifier scores the final result. This interaction exemplifies a cooperative POSG~\citep{bernstein2002complexity}.

Formally, we define the cooperative POSG as a tuple $\langle \nagents, \cS, \{\cA_i\}, \{\cO_i\}, T, R \rangle$, where $T$ is the transition kernel, $R$ the shared reward function, and $\nagents$ agents take actions sequentially according to a fixed protocol. At each decision point $u$, agent $i = \text{role}(u)$ observes history $\huvar$ and selects action $\action_u \sim \pitheta^i(\cdot \mid \huvar)$. The episode terminates after $\ndecisions$ decision points with terminal reward $\Rterm$.

Two structural features distinguish this setting from standard multi-agent reinforcement learning (MARL):

\textbf{Macro-actions.}
Each decision point produces a complete text message rather than a single token. The decision sequence is therefore far shorter than the token sequence; this work validates 2-step and 3-step protocols (\cref{sec:discussion}).

\textbf{Deterministic history.}
The history $\huvar = h(x, T_u)$ is a deterministic function of the task instance $x$ and the sequence of prior messages $T_u$. Each agent's observation consists of the task input concatenated with all preceding messages; no hidden state exists beyond the observable text. As we show in \cref{sec:distshift}, this property has consequences that fundamentally distinguish credit assignment in the LLM setting from standard MARL.

\subsection{Credit Assignment Problem}
\label{sec:credit}

Given a trajectory $\traj$ with $\ndecisions$ decision points and terminal reward $\Rterm$, the credit assignment problem is to estimate the per-decision advantage at each decision point $u$:
\begin{equation}
\label{eq:advantage}
A(u) \;=\; \E\bigl[R(\traj) \mid \huvar,\, \action_u\bigr] \;-\; \E\bigl[R(\traj) \mid \huvar\bigr],
\end{equation}
quantifying the marginal contribution of action $\action_u$ beyond what the history $\huvar$ already determines.

Suppose the upstream agent provides a sound analysis but the downstream agent introduces an error, yielding $\Rterm = 0$. Both agents receive equal penalty under trajectory-level methods, despite only the downstream agent being responsible. Estimating $A(u)$ separately for each decision point is required to identify the source of failure.

In cooperative POSG settings, exact counterfactual evaluation requires privileged access to hidden environment state or a state-resetting simulator. Lacking such access, existing credit assignment methods are designed around approximation. Three classes address advantage estimation in the multi-agent LLM setting, each inheriting this paradigm:

\begin{enumerate}[leftmargin=*,label=(\alph*)]
\item \textbf{Parametric critics}~\citep{yu2022surprising,foerster2018counterfactual} learn value functions over joint observations, but approximation error compounds across the extended text histories typical of LLM interactions~\citep{lowe2017multi}.

\item \textbf{Trajectory-level methods}~\citep{shao2024deepseekmath,zhang2024cory} compare terminal rewards across complete trajectories, assigning identical credit to every decision point within an episode.

\item \textbf{Agent-removal counterfactuals}~\citep{wan2025ccpo} estimate contributions by comparing outcomes with and without an agent, but removal changes all remaining agents' observation distributions, introducing a systematic bias formalized in \cref{sec:distshift}.
\end{enumerate}

\subsection{Counterfactual Distribution Shift}
\label{sec:distshift}

Agent-removal evaluation introduces a systematic bias, formalized below.

\begin{proposition}[Distribution Shift under Agent Removal]
\label{prop:distshift}
Let agent $i$ be removed from the protocol to estimate its per-decision advantage. Then:
\begin{enumerate}[label=(\roman*)]
\item Each remaining agent $j$'s conditional observation distribution changes: $P(o_j \mid \mathrm{remove}\; i) \neq P(o_j \mid i\;\mathrm{present})$.
\item The resulting bias is systematic and cannot be eliminated by increasing the number of rollout samples. Its magnitude is monotonically increasing in the strength of inter-agent observation dependence.
\item In sequential protocols where downstream agents condition directly on upstream outputs, the dependence is maximal, and the bias is largest.
\end{enumerate}
\end{proposition}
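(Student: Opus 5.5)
The plan is to make the informal statement precise in the sequential-protocol model of \cref{sec:posg}, and then prove each part as a short calculation. Agent $j$ (downstream of $i$) sees the observation $o_j = h(x, T_{u_j})$. Under the deterministic-history property, this is a deterministic function of the task $x$ and the earlier messages. With $i$ present, $T_{u_j}$ contains $\action_{u_i} \sim \pitheta^i(\cdot \mid h_{u_i})$. With $i$ removed, that message is either deleted or replaced by a fixed placeholder $\varnothing$. So
\[
P(o_j \mid i\ \mathrm{present}) = \sum_{a} \pitheta^i(a \mid h_{u_i})\, P(o_j \mid a, h_{u_i}),
\qquad
P(o_j \mid \mathrm{remove}\ i) = P(o_j \mid \varnothing, h_{u_i}).
\]

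For (i), it suffices to observe the following. Since $o_j$ contains the literal text of $\action_{u_i}$ as a substring, the two distributions are supported on disjoint sets of histories whenever $\pitheta^i$ puts no mass on the empty message. Hence their total variation distance is $1$. More generally, they differ whenever $P(\action_{u_i} = \varnothing \mid h_{u_i}) < 1$. I would state (i) under this nondegeneracy assumption.

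For (ii), define the removal estimator of agent $i$'s contribution as $\Delta_{\mathrm{rem}} = \E[R \mid i\ \mathrm{present}] - \E[R \mid \mathrm{remove}\ i]$. The target, from \eqref{eq:advantage}, is the per-decision advantage $A(u_i)$, or its averaged version. The key step is to decompose the removal term:
\[
\E[R\mid \mathrm{remove}\ i] = \E_{h_{u_i}}\bigl[\E[R \mid h_{u_i}, \varnothing]\bigr]
= \E[R \mid h_{u_i}] + \underbrace{\E\bigl[\E[R\mid h_{u_i},\varnothing] - \E[R \mid h_{u_i}]\bigr]}_{\text{shift term}},
\]
where the shift term equals $\sum_j \E_{o_j}[\cdot]$ differences under $P(\cdot\mid\mathrm{remove}\ i)$ versus $P(\cdot \mid i\ \mathrm{present})$, obtained by telescoping over downstream agents. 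The Monte Carlo estimator with $M$ rollouts has expectation equal to $\Delta_{\mathrm{rem}}$ for every $M$. Its bias $\Delta_{\mathrm{rem}} - A$ is therefore independent of $M$; only the variance decays, at rate $O(1/M)$. This gives the first claim of (ii).

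For monotonicity, I would parametrize dependence by a scalar $\lambda\in[0,1]$. The idea is a mixture protocol: downstream agent $j$ conditions on the true upstream message with probability $\lambda$ and on a $\varnothing$-independent context otherwise. Then $P_\lambda(o_j\mid\cdot)$ is affine in $\lambda$. Bounding $|\text{shift}| \le \|R\|_\infty \cdot \mathrm{TV}(P_\lambda(o_j\mid\mathrm{remove}), P_\lambda(o_j\mid\mathrm{present}))$ gives a TV distance equal to $\lambda\,\mathrm{TV}_1$, which is linear and hence monotone in $\lambda$. Equivalently, one can measure dependence by the mutual information $I(\action_{u_i}; o_j)$ and use Pinsker. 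I expect this to be the main obstacle. The exact bias, as opposed to a bound, need not be monotone without a sign or alignment condition: cancellations between downstream agents can occur. So I would either state monotonicity for the worst-case bias over reward functions bounded by $\|R\|_\infty$, or use the linear mixture family, where the bias itself is $\lambda$ times a fixed quantity.

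For (iii), at $\lambda=1$ the observation $o_j$ contains $\action_{u_i}$ verbatim. Then $I(\action_{u_i}; o_j) = H(\action_{u_i}\mid h_{u_i})$, the maximum possible, and the TV bound from (i) is attained at $1$. So the worst-case bias bound $\|R\|_\infty$ is achieved. I would close by exhibiting the two-agent example of \cref{fig:overview} as a witness: removing a correct upstream analysis yields zero estimated credit while $A(u_i)>0$.
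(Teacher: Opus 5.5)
Your proposal is correct and has the same skeleton as the paper's proof. In (i), the observation law with agent $i$ present is a $\pib^i$-mixture over histories, while removal pins the upstream slot to the placeholder. In (ii), the bias is the mismatch $\sum_{o_j}[P(o_j\mid\mathrm{remove}\ i)-P(o_j\mid i\ \mathrm{present})]\,\E[R\mid o_j]$. This is a property of the two distributions, so no number of rollouts changes it, and it is bounded by $R_{\max}\cdot\mathrm{TV}$. In (iii), verbatim conditioning gives $\mathrm{TV}=1-\pib^i(\varnothing\mid h_{u_i})$; your value $1$ is the zero-placeholder-mass case. In (i), read your substring hypothesis as recoverability of $\action_{u_i}$ from $o_j$. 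The paper needs only the weaker condition that some on-policy action yields a history different from the placeholder's.

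Where you differ, you improve on the paper. Your bias $\Delta_{\mathrm{rem}}-A(u_i)=\E[R\mid h_{u_i}]-\E[R\mid h_{u_i},\varnothing]$ actually matches the decomposition. The paper instead subtracts $\E[R\mid h_{u_i},\action_{u_i}]$ in its definition and then decomposes against the marginal $P(o_j\mid i\ \mathrm{present})$. For monotonicity, the paper only argues that the bound $R_{\max}\cdot\mathrm{TV}$ grows with sensitivity. As you rightly note, that does not make the bias itself monotone. Your $\lambda$-mixture family, in which the bias is exactly $\lambda$ times a fixed quantity, proves the claim. The cost is committing to one formalization of dependence strength; the paper's route avoids that choice but only establishes monotonicity of the bound.

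Drop the mutual-information variant, though: it is not equivalent and does not control the bias. Since $I(\action_{u_i};o_j\mid h_{u_i})\ge\pib^i(\varnothing\mid h_{u_i})\,\mathrm{KL}(P(o_j\mid\varnothing)\,\|\,P(o_j))$, Pinsker gives only $\mathrm{TV}\le\sqrt{I/(2\,\pib^i(\varnothing\mid h_{u_i}))}$. That bound is vacuous when the placeholder is off-policy, which is the typical case. Concretely, a deterministic upstream policy with $\action_{u_i}\neq\varnothing$ has $I=0$, yet $\mathrm{TV}=1$ under verbatim conditioning. So the identity $I=H(\action_{u_i}\mid h_{u_i})$ in your (iii) is beside the point. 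Removal bias is governed by how far the placeholder lies from the policy's actions, not by how much information those actions carry.

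Likewise, the $0-0$ example of \cref{fig:overview} is a single-rollout illustration. The expectation-level witness of nonzero bias is $\E[R\mid h_{u_i},\varnothing]=0<\E[R\mid h_{u_i}]$.
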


\noindent The complete proof is given in \cref{app:distshift}.

A concrete example (\cref{fig:overview}b): if the upstream agent's correct action is removed, the downstream agent acts alone and also fails ($R = 0$); the estimated contribution is $0 - 0 = 0$, assigning zero credit to a correct decision. Under fixed-history evaluation, four alternatives sampled from the same history yield rewards $\{0, 1, 1, 0\}$, and the LOO baseline correctly distinguishes actions that lead to success from those that do not.

\paragraph{From Approximation to Exact Causal Evaluation.}
The barriers that necessitate approximation in general cooperative POSG settings are all absent in cooperative LLM systems. The state is not hidden behind partial observations; it is the text itself. Transitions are not stochastic; the history is a deterministic function of prior messages. And the past need not be simulated; the text record is its own checkpoint. The deterministic-history property identified in \cref{sec:posg} therefore converts counterfactual credit assignment from an estimation problem into a measurement problem: the system saves a checkpoint $\checkpoint$, restores any prior decision point exactly, and directly observes the causal effect of varying one action while holding all else fixed.\footnote{``Exact'' denotes that the target advantage under the correct interventional distribution is identified without parametric approximation; Monte Carlo estimates retain standard sampling variance that decreases with additional rollouts ($c_j$ in \cref{sec:rollout}), but carry no structural bias.} \Cref{sec:method} develops this into a complete algorithm; \cref{sec:discussion} considers broader implications of the same primitive.

%% file: sections/method.tex
\section{Method}
\label{sec:method}

The three families of credit methods in \cref{sec:credit} (parametric critics, trajectory-level baselines, and agent-removal counterfactuals) exist because exact counterfactual evaluation appeared to require privileged access to hidden environment state. With that premise dissolved (\cref{sec:distshift}), C3 reduces exact credit assignment to three operations, each requiring no learned parameters (\cref{fig:overview}, panel~(b)): (1)~freeze the current policy and sample alternative actions at each decision point while keeping history fixed; (2)~evaluate each alternative via Monte Carlo rollout from its restored checkpoint; (3)~compute a leave-one-out baseline within the resulting rollout group to obtain unbiased per-decision advantages.

\subsection{Counterfactual Action Sampling}
\label{sec:action_sampling}

Comparing actions under the \emph{same} history eliminates the distribution shift identified in \cref{sec:distshift}: all downstream agents observe identical conditioning regardless of which action is being evaluated. In general POSG settings, holding history fixed requires a simulator capable of restoring hidden environment state; in cooperative LLM interaction, the observable text \emph{is} the complete state, making fixed-history comparison exact by construction.

At each iteration, the current policy is frozen as $\pib \leftarrow \pitheta$. Under $\pib$, reference trajectories are executed, recording at each decision point $u$ the role $v = \text{role}(u)$, history $\huvar$, and checkpoint $\checkpoint$.

For each (role, history) pair, $\fanout \geq 2$ alternative actions $\action_j \sim \pib^v(\cdot \mid \huvar)$ are sampled, forming a \emph{rollout group} sharing identical context. Unlike agent-removal~\citep{wan2025ccpo}, C3 replaces only the evaluated action while preserving the complete history prefix.

\subsection{Checkpoint Rollout}
\label{sec:rollout}

Each alternative must be evaluated to episode completion. The checkpoint $\checkpoint$ is restored, $\action_j$ is injected, and downstream agents complete the episode under $\pib$. Each action receives $c_j \geq 1$ independent rollouts, yielding:
\begin{equation}
\label{eq:qdb}
\hat{Q}^{\Db}(\huvar, \action_j) \;=\; \frac{1}{c_j} \sum_{m=1}^{c_j} R_m,
\end{equation}
an unbiased estimate of $\QDb(\huvar, \action_j) = \E_{\Db}[\Rterm \mid \huvar, \action_j]$. The total budget is fixed at $\sum_{v,h,j} c_j = B$; checkpoint restoration avoids regenerating history prefixes, so cost scales with continuation length.

\subsection{Leave-One-Out Advantage Estimation}
\label{sec:loo}

A global baseline reintroduces between-group variation because different groups face different task instances; within-group baselines eliminate this confound~\citep{greensmith2004variance}. C3 adopts a leave-one-out (LOO) construction related to difference rewards~\citep{wolpert2001optimal}: the baseline for each action excludes that action's own outcomes.

For action $\action_j$ in a group of $\fanout$ alternatives with rollout counts $\{c_1, \ldots, c_\fanout\}$ and total $C = \sum_k c_k$, the LOO baseline is:
\begin{equation}
\label{eq:bloo}
\bloo \;=\; \frac{1}{C - c_j} \sum_{k \neq j} c_k\, \hat{Q}^{\Db}(\huvar, \action_k).
\end{equation}
The per-decision advantage estimate follows directly:
\begin{equation}
\label{eq:ahat}
\Ahat_j \;=\; \hat{Q}^{\Db}(\huvar, \action_j) \;-\; \bloo.
\end{equation}

Two properties follow from the fixed-history construction:

\begin{proposition}[Unbiasedness]
\label{prop:unbiased}
Under the continuation distribution $\Db$, $\Ahat_j$ is an unbiased estimate of the target advantage $\Astar(\huvar, \action_j) = \QDb(\huvar, \action_j) - \VDb(\huvar)$. The proof is given in \cref{app:prop1}.
\end{proposition}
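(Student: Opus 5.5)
We will prove the claim by conditioning on the fixed history $\huvar$ and on the evaluated action $\action_j$, and then taking expectations of the two terms of \cref{eq:ahat} separately. The target $\Astar(\huvar,\action_j)$ is a function of $\action_j$, so ``unbiased'' will mean $\E[\Ahat_j \mid \huvar, \action_j] = \QDb(\huvar,\action_j) - \VDb(\huvar)$. Here $\VDb(\huvar) = \E_{\action\sim\pib^v(\cdot\mid\huvar)}[\QDb(\huvar,\action)]$, with $v=\mathrm{role}(u)$ and all continuations drawn from the frozen behavior policy $\pib$, which defines $\Db$.

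\textbf{Step 1 (first term).} By the deterministic-history property of \cref{sec:posg}, restoring $\checkpoint$ reproduces $\huvar$ exactly. Each of the $c_j$ rollouts after injecting $\action_j$ is therefore an independent draw of $\Rterm$ from the continuation distribution $\Db$ given $(\huvar,\action_j)$. Linearity in \cref{eq:qdb} then gives $\E[\hat Q^{\Db}(\huvar,\action_j)\mid \huvar,\action_j] = \QDb(\huvar,\action_j)$.

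\textbf{Step 2 (baseline).} For $k\neq j$, the action $\action_k$ is sampled from $\pib^v(\cdot\mid\huvar)$ independently of $\action_j$, and its rollouts are independent of those of $\action_j$. By the tower rule,
\[
\E[\hat Q^{\Db}(\huvar,\action_k)\mid\huvar,\action_j] = \E_{\action_k\sim\pib^v}\bigl[\QDb(\huvar,\action_k)\bigr] = \VDb(\huvar).
\]
The weights $c_k/(C-c_j)$ in \cref{eq:bloo} are nonnegative and sum to one over $k\neq j$, which is well defined because $\fanout\ge 2$. Hence $\E[\bloo\mid\huvar,\action_j]=\VDb(\huvar)$. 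Subtracting this from Step 1 yields the claim. Averaging over $\action_j\sim\pib^v$ also gives the corollary $\E[\Ahat_j\mid\huvar]=0$, consistent with $\E_{\pib}[\Astar]=0$.

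\textbf{Main obstacle.} Step 2 relies on two conditions. First, the rollout counts $\{c_k\}$ must be fixed in advance, or at least chosen independently of the realized rewards. Otherwise the weights become random and correlated with the $\hat Q$'s, and a ratio-type bias appears. Second, the baseline must exclude $\action_j$'s own outcomes; this exclusion is exactly what the leave-one-out construction provides. With a baseline that includes them, the baseline would be correlated with $\hat Q^{\Db}(\huvar,\action_j)$, and $\Ahat_j$ would be shrunk by a factor of $(C-c_j)/C$.

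I would therefore state explicitly that the budget allocation $c_j$ from \cref{sec:rollout} is determined before any rewards are observed, for example by a fixed schedule under the total budget $B$. I would also verify that the sampling of $\action_k$ is independent across $k$ and independent of $\action_j$ given $\huvar$. The remaining point to check is that the behavior policy is frozen ($\pib\leftarrow\pitheta$) for the whole group, so that a single distribution $\Db$ governs both the sampled alternatives and all their continuations.
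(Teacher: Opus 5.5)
Your proof is correct and follows the paper's argument essentially step for step: you first show each $\hat{Q}^{\Db}$ is unbiased, then use constant weights $c_k/(C-c_j)$ summing to one, together with independent sampling of $\action_k\sim\pib$, to get $\E[\bloo\mid\huvar,\action_j]=\VDb(\huvar)$, and finally subtract. One refinement to your ``main obstacle'' remark: the paper's non-adaptivity assumption fixes the $c_k$ before observing the actions as well as the rewards, and ``determined before any rewards are observed'' is not quite enough, because for $\fanout\ge 3$ an allocation that depends on the sampled actions turns $\bloo$ into a self-normalized weighted average whose expectation need not equal $\VDb(\huvar)$.
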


\noindent Because $\action_j$ is excluded from its own baseline, positive actions cannot inflate their own reference point and between-group variation cancels.

\subsection{Policy Optimization}
\label{sec:optimization}

The advantages are consumed by the clipped surrogate of Proximal Policy Optimization (PPO)~\citep{schulman2017proximal}:
\begin{equation}
\label{eq:ppo}
\cL(\theta) \;=\; \E_{(h,a,\Ahat) \sim \Db}\Bigl[\min\bigl(r(\theta)\,\Ahat,\; \text{clip}(r(\theta), 1{-}\epsilon, 1{+}\epsilon)\,\Ahat\bigr)\Bigr],
\end{equation}
where $r(\theta) = \pitheta(a \mid h) / \pib(a \mid h)$. Any policy gradient method accepting per-sample advantages can substitute PPO. The complete pseudocode is in Algorithm~\ref{alg:c3} (\cref{app:algorithm}).

%% file: sections/analysis.tex
\section{Credit Assignment Diagnostics}
\label{sec:diagnostics}

The deterministic-history property has a second consequence: it enables exact \emph{verification} of credit quality. In general POSG settings, verification requires ground-truth advantages that are themselves intractable, creating circularity; in cooperative LLM systems, exact interventional evaluation provides these references directly. We define three complementary, method-agnostic metrics that measure credit quality from rollout data without relying on task performance.

\subsection{Credit Fidelity}
\label{sec:fidelity}

Within each rollout group, the target advantage is:
\begin{equation}
\label{eq:target_adv}
\Astar(\huvar, \action_j) \;=\; \QDb(\huvar, \action_j) \;-\; \VDb(\huvar), \qquad \VDb(\huvar) = \E_{\action' \sim \pib}\bigl[\QDb(\huvar, \action')\bigr].
\end{equation}
Credit fidelity is the Spearman $\rho$ between $\Ahat_j$ and $\Astar(\huvar, \action_j)$ over the $\fanout$ actions; $\rho \to 1$ indicates correct ranking, $\rho \approx 0$ indicates noise. Averaging over all groups yields a scalar tracking credit quality across training.

\subsection{Within-Group Variance}
\label{sec:wgv}

High variance destabilizes policy updates even with correct rankings. The empirical action value decomposes as:
\begin{equation}
\label{eq:var_decomp}
\hat{Q}^{\Db}(\huvar, \action_j) \;=\; \underbrace{\VDb(\huvar)}_{m(\huvar)} \;+\; \underbrace{\Astar(\huvar, \action_j)}_{\delta_j} \;+\; \underbrace{\hat{Q}^{\Db}(\huvar, \action_j) - \QDb(\huvar, \action_j)}_{\varepsilon_j},
\end{equation}
where $m(\huvar)$ is the history-level mean, $\delta_j$ the true action effect, and $\varepsilon_j$ zero-mean rollout noise. The LOO baseline cancels $m(\huvar)$ exactly; within-group variance of $\Ahat_j$ then reflects action dispersion plus residual noise. Lower values indicate more stable credit estimates. The full analysis is in \cref{app:crn}.

\subsection{Inter-Agent Influence}
\label{sec:influence}

Without causal coupling between upstream decisions and downstream outcomes, credit assignment has no target. Inter-agent influence quantifies this coupling as conditional mutual information~\citep{jaques2019social}:
\begin{equation}
\label{eq:influence}
\influence \;=\; H(Y \mid \huvar) \;-\; H(Y \mid J,\, \huvar),
\end{equation}
where $J \in \{1, \ldots, \fanout\}$ indexes the injected action and $Y$ the downstream response under $\Db$. C3's rollout groups provide paired $(J, Y)$ samples without additional data collection. Influence near zero signals the lazy-agent failure mode~\citep{zhang2025lazy}. The plug-in estimator is derived in \cref{app:influence_derivation}.

%% file: sections/experiments.tex

\section{Experiments}
\label{sec:experiments}

The experiments below test whether unbiased advantage estimation (\cref{sec:loo}) and the diagnostic framework (\cref{sec:diagnostics}) translate into measurable gains across six benchmarks, two model families, and two topologies.

\subsection{Experimental Setup}
\label{sec:setup}

\paragraph{Benchmarks.}
Four mathematical reasoning tasks: MATH500~\citep{hendrycks2021measuring}, AIME 2025~\citep{aime2025} (30 competition-level problems), CMATH~\citep{wei2023cmath}, and GSM8K~\citep{cobbe2021training}. Two code generation tasks: MBPP+~\citep{liu2024evalplus} and MBPP-test~\citep{austin2021program}.

\paragraph{Models.}
All experiments use instruction-tuned models at the 3--4B parameter scale: Qwen2.5-3B-Instruct~\citep{yang2024qwen25} and Qwen3-4B-Instruct~\citep{yang2025qwen3} for mathematical reasoning, and Qwen2.5-Coder-3B-Instruct~\citep{hui2024qwen2} for code generation.

\paragraph{Protocol.}
The primary topology is the two-agent Duo protocol from MARFT~\citep{liao2025marft}: Reasoner $\to$ Actor. We additionally evaluate the three-agent Trio extension: Reasoner $\to$ Actor $\to$ Verifier. Prompt templates are in \cref{app:protocol}.

\paragraph{Evaluation.}
All methods start from a shared SFT checkpoint: frozen $\pib$ generates all rollouts with budget $B = 8$ verifier calls per instance.\footnote{Tree-search approaches that greedily select intermediate actions~\citep{pham2026stronger} operate under a different paradigm and are not directly comparable.} C3 allocates this as 2 rollout groups $\times$ 4 Actor alternatives. Results: mean $\pm$ std, 5 seeds. SFT accuracy: \cref{app:extended_2a}; hyperparameters: \cref{app:hyperparams}.

\subsection{Separating Architecture and Credit Gains}
\label{sec:architecture_credit}

\paragraph{Three-Level Decomposition.}
Converting a single-agent pipeline to a multi-agent protocol introduces two sources of improvement: architecture (role specialization with additional parameters) and credit assignment. To disentangle them, we compare single-agent PPO (1A), two-agent Multi-Agent PPO (MAPPO, 2A), and two-agent C3 (2A) on Qwen3-4B (\cref{tab:decomposition}).

\begin{table}[t]
\caption{Three-level decomposition on Qwen3-4B: architecture gain (PPO$\to$MAPPO) vs.\ credit gain (MAPPO$\to$C3). All values are greedy accuracy (\%). Mean $\pm$ std, 5 seeds. $\dagger$Avg.\ averages MATH500, CMATH, and GSM8K; AIME\,2025 (30\,problems) is reported separately.}
\label{tab:decomposition}
\centering
\small
\begin{tabular}{@{}lccccc@{}}
\toprule
Method & MATH500 & AIME 2025 & CMATH & GSM8K & Avg.$^{\dagger}$ \\
\midrule
1A PPO & 48.1\,$\pm$\,0.7 & 1.3\,$\pm$\,1.2 & 92.6\,$\pm$\,0.3 & 91.1\,$\pm$\,0.4 & 77.3\,$\pm$\,0.4 \\
2A MAPPO \textit{(+arch.)} & 69.3\,$\pm$\,0.9 & 3.3\,$\pm$\,1.2 & 95.3\,$\pm$\,0.2 & 92.9\,$\pm$\,0.3 & 85.8\,$\pm$\,0.3 \\
\midrule
2A MAGRPO & 74.5\,$\pm$\,0.5 & 5.3\,$\pm$\,1.8 & 96.1\,$\pm$\,0.2 & 93.4\,$\pm$\,0.3 & 88.0\,$\pm$\,0.1 \\
2A C3 \textit{(+credit)} & \textbf{82.8\,$\pm$\,0.6} & \textbf{7.9\,$\pm$\,1.9} & \textbf{96.3\,$\pm$\,0.2} & \textbf{93.6\,$\pm$\,0.3} & \textbf{90.9\,$\pm$\,0.4} \\
\bottomrule
\end{tabular}
\end{table}

The architecture gain (PPO$\to$MAPPO) accounts for 21.2\,pp on MATH500; credit assignment (MAPPO$\to$C3) adds 13.5\,pp. MAGRPO captures part of the credit gain but falls short of C3's per-decision isolation. \Cref{fig:learning} shows C3 reaching a higher plateau with tighter confidence intervals, consistent with lower within-group variance (\cref{sec:wgv}). Extended results: \cref{app:extended_2a}.

\begin{figure}[t]
\centering
\begin{subfigure}[b]{\textwidth}
  \centering
  \includegraphics[width=0.92\textwidth]{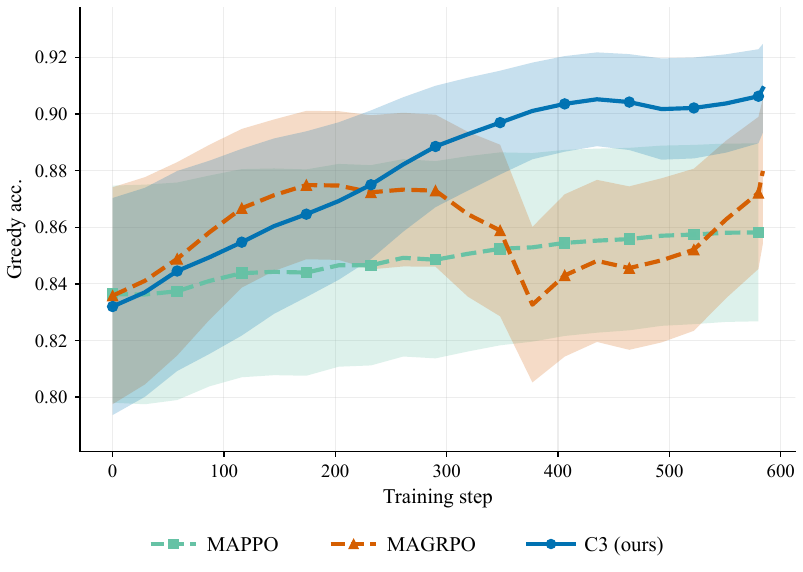}
  \vspace{-4pt}
  \caption{Learning dynamics. Greedy accuracy (\%) vs.\ training step. C3 reaches plateau $90.9\%$; MAGRPO peaks at $88.0\%$ (unstable after step 350); MAPPO plateaus at ${\sim}86\%$. Shaded: $\pm 1$ standard error of the mean (SEM) over 5 seeds $\times$ 3 math benchmarks.}
  \label{fig:learning}
\end{subfigure}
\vspace{4pt}
\begin{subfigure}[b]{\textwidth}
  \centering
  \includegraphics[width=0.92\textwidth]{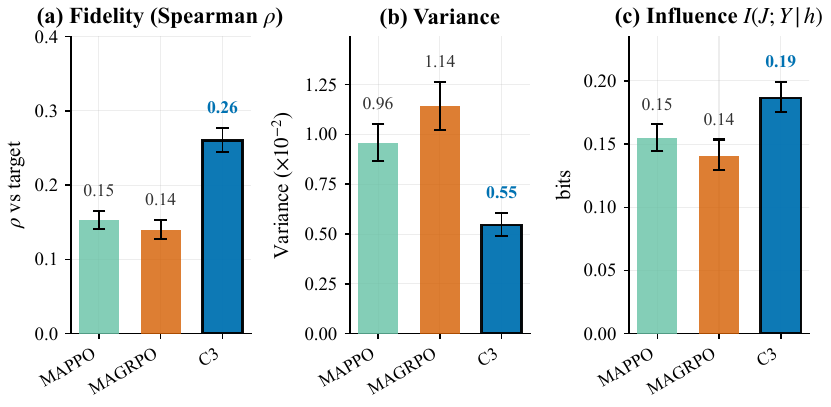}
  \vspace{-4pt}
  \caption{Diagnostics (6 benchmarks, 5 seeds). Left: credit fidelity $\rho$. Center: within-group variance ($\times 10^{-2}$). Right: inter-agent influence (bits). Error bars: 95\% bootstrap confidence interval (CI). Per-benchmark: \cref{app:diagnostics_perbench}.}
  \label{fig:diagnostics}
\end{subfigure}
\caption{Training dynamics and diagnostic metrics on Qwen3-4B.}
\label{fig:results}
\end{figure}

\paragraph{Three-Agent Extension.}
Extending to the Trio topology (Reasoner $\to$ Actor $\to$ Verifier) distributes the same budget $B = 8$ across three decision points, reducing per-agent alternatives. C3 retains its advantage (89.3\% vs.\ 88.3\% MAPPO, 78.2\% MAGRPO on the math aggregate), though the gap narrows from 5.1\,pp (2A) to 1.0\,pp (3A), consistent with the reduced per-agent budget. MAGRPO degrades sharply in 3A because its trajectory-level baseline amplifies credit noise with additional agents (\cref{sec:wgv}). Full per-benchmark 3A results, including Qwen2.5-3B, are in \cref{app:extended_3a}.

\subsection{Diagnostic Validation}
\label{sec:diagnostic_validation}

\Cref{fig:diagnostics} reports all three metrics averaged across six benchmarks and five seeds on Qwen3-4B. C3 achieves the highest credit fidelity ($\rho = 0.260$ vs.\ $0.152$ MAPPO), lowest within-group variance ($0.546 \times 10^{-2}$, 43\% reduction), and highest inter-agent influence ($\influence = 0.187$~bits), confirming that downstream behavior responds to upstream decisions rather than collapsing into the lazy-agent mode~\citep{zhang2025lazy}.

\subsection{Ablation and Efficiency}
\label{sec:ablation_efficiency}

\paragraph{Ablation.}
Removing fixed-history conditioning reduces accuracy from 90.9\% to 86.4\% ($-$4.5\,pp); replacing LOO with a global mean yields 89.4\% ($-$1.5\,pp). The distribution shift analysis (\cref{sec:distshift}) predicts that holding history fixed, not the choice of baseline, is the primary source of credit improvement; the threefold larger ablation drop confirms this prediction.\footnote{CCPO~\citep{wan2025ccpo} removes the agent entirely; the w/o~Fixed-History ablation isolates the same confound in a controlled setting.} Per-benchmark ablation: \cref{app:ablation}. CORY comparison and fanout sensitivity: \cref{app:fanout}.

\paragraph{Compute Efficiency.}
Checkpoint restoration avoids regenerating history prefixes: C3 uses 418.4\,M tokens versus MAPPO's 616.1\,M (32\% reduction) with 32\% less wall-clock time (\cref{tab:compute,app:compute}).

%% file: sections/related.tex
\section{Related Work}
\label{sec:related}

\paragraph{Credit Assignment in Cooperative MARL.}
Cooperative MARL credit assignment has developed under the premise that exact counterfactual evaluation requires privileged environment access, making approximation necessary. Difference rewards~\citep{wolpert2001optimal} replace one agent's action with a fixed default; COMA~\citep{foerster2018counterfactual} marginalizes over the target agent's action space, answering the same interventional question C3 pursues but requiring enumerable discrete actions and evaluating only single-timestep counterfactuals. MAPPO~\citep{yu2022surprising} uses a centralized critic; value factorization~\citep{sunehag2017value,rashid2020monotonic,wang2020qplex} decomposes joint values under Markov and discrete-action assumptions. In cooperative LLM systems, the difficulty motivating these approximations is absent: C3 realizes the evaluation COMA conceptually pursues, directly through checkpoint rollout with full causal cascade.

\paragraph{Multi-Agent LLM Credit Assignment.}
MARFT~\citep{liao2025marft} establishes multi-agent reinforcement fine-tuning for LLM systems~\citep{li2023camel,qian2024chatdev,wu2024autogen}; MAGRPO~\citep{liu2025magrpo} and CORY~\citep{zhang2024cory} assign credit at the trajectory level. AT-GRPO~\citep{pham2026stronger} introduces per-turn branching but selects greedily at each decision point rather than rolling out to terminal, measuring comparative quality rather than full causal effect. CCPO~\citep{wan2025ccpo} pursues attribution through agent removal, introducing the distribution shift formalized in \cref{sec:distshift}. HCAPO~\citep{tan2026hcapo} applies hindsight credit to single-agent tasks; CoFi-PGMA~\citep{huang2026cofi} concurrently explores LOO credit without the checkpoint-replay or deterministic-history analysis developed here.

\paragraph{Process-Based Supervision.}
Process reward models~\citep{lightman2024lets} provide dense per-step supervision through learned verifiers; outcome reward models~\citep{cobbe2021training} score only final answers. C3 requires only terminal reward yet achieves per-decision credit through counterfactual evaluation. The two paradigms are complementary (\cref{sec:discussion}).

%% file: sections/discussion.tex
\section{Discussion}
\label{sec:discussion}

\paragraph{Diagnostic Infrastructure.}
The three diagnostic dimensions (\cref{sec:diagnostics}) are not interchangeable: high fidelity with high variance indicates correct ranking but noisy estimates; high fidelity with low influence signals the lazy-agent failure mode~\citep{zhang2025lazy}. The framework is not tied to C3; all three metrics can be computed for any credit method given rollout data under a shared history. This serves as infrastructure analogous to FID/IS for generative models.

\paragraph{Limitations.}
Six conditions bound the validation: (1)~all experiments use protocols with two or three decision points; resolution decreases as decision points grow under fixed budget (\cref{app:crn,app:influence_derivation}); (2)~stochastic tool calls convert exact evaluation to Monte Carlo estimation at those steps, increasing variance while preserving unbiasedness; (3)~advantages under frozen $\pib$ may diverge from $\pitheta$ under rapid iteration despite PPO clipping~\citep{schulman2017proximal}; (4)~C3's gap narrows in 3A and may narrow further with more agents; (5)~all models are $\leq$4B parameters; (6)~as entropy decreases, rollout group diversity diminishes, reducing credit signal.

\paragraph{Open Questions.}
Four directions extend from this work: (1)~adaptive continuation policy tracking $\pitheta$; (2)~integration with process reward models~\citep{lightman2024lets} for denser signal; (3)~automatic decision-point identification for unstructured dialogue; (4)~incorporating the influence metric as an auxiliary objective to prevent lazy-agent collapse.

\paragraph{Beyond Credit Assignment.}
The deterministic-history property is not specific to credit assignment; any causal question of the form ``what if this agent had acted differently?'' reduces to the same primitive: restore, vary, observe. Credit assignment asks this question at training time to compute advantages; but the same operation answers interventional questions that arise outside training: responsibility attribution after a system failure (which decision caused the error?), capability diagnosis of individual agents under fixed team context (how does one agent perform when teammates are held constant?), and robustness measurement of downstream agents to upstream variation (does the verifier degrade gracefully when the drafter errs?). In general multi-agent systems, each of these questions requires either a privileged simulator or parametric approximation; in cooperative LLM systems, the text record is its own checkpoint and the observable history is the complete state, making exact intervention available by construction. C3 demonstrates this structural fact for one interventional question. The principle that cooperative LLM systems are experimentally transparent in a way that general multi-agent systems are not extends to any post-hoc causal analysis of agent behavior.

%% file: sections/conclusion.tex
\section{Conclusion}
\label{sec:conclusion}

Interaction histories in cooperative LLM systems are deterministic functions of observable text with no hidden state, so any prior decision point can be restored exactly. C3 demonstrates one consequence of this property: credit assignment reduces from estimation to measurement, and a parameter-free method consistently outperforms all approximate alternatives across six benchmarks, two model families, and two topologies. Three independently computable diagnostics demonstrate a second consequence: credit quality itself becomes verifiable without ground truth. Both consequences share a common origin: cooperative LLM systems are experimentally transparent, admitting exact interventional evaluation by construction. Neither required new algorithmic machinery, only recognition that a premise inherited from general multi-agent reinforcement learning does not hold in the text-mediated setting. The transition to language-model agents may dissolve other inherited difficulties; credit assignment is the first to yield.

%% file: appendix/theory.tex
\section{Theoretical Proofs and Derivations}
\label{app:theory}

The deterministic-history property identified in \cref{sec:distshift} converts counterfactual credit assignment from an estimation problem into a measurement problem; the same property enables exact verification of credit quality, dissolving the circularity that makes such verification intractable in general POSG settings (\cref{sec:diagnostics}). This appendix provides the complete proofs underlying both consequences. \Cref{app:distshift} proves that agent removal introduces systematic observation distribution shift that cannot be reduced by additional sampling; \cref{app:prop1} establishes the unbiasedness of the LOO advantage estimator under fixed-history evaluation; \cref{app:crn} derives the variance reduction properties of within-group comparison; \cref{app:influence_derivation} develops the conditional mutual information estimator for inter-agent influence.

\subsection{Distribution Shift under Agent Removal}
\label{app:distshift}

We restate and prove \cref{prop:distshift} from \cref{sec:distshift}.

\begin{proposition}[Distribution Shift under Agent Removal, restated]
Let agent $i$ be removed from the protocol to estimate its per-decision advantage. Then:
\begin{enumerate}[label=(\roman*)]
\item Each remaining agent $j$'s conditional observation distribution changes: $P(o_j \mid \mathrm{remove}\; i) \neq P(o_j \mid i\;\mathrm{present})$.
\item The resulting bias is systematic and cannot be eliminated by increasing the number of rollout samples. Its magnitude is monotonically increasing in the strength of inter-agent observation dependence.
\item In sequential protocols where downstream agents condition directly on upstream outputs, the dependence is maximal, and the bias is largest.
\end{enumerate}
\end{proposition}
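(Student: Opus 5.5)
I will first fix a concrete model of the sequential protocol, since the statement is informal. Order the decision points $u=1,\dots,\ndecisions$ and let agent $i$ act at some point $u_i$. Under the intact protocol, a downstream agent $j$ at $u>u_i$ observes $o_j=\huvar=h(x,T_u)$, and $T_u$ contains $\action_{u_i}\sim\pib^i(\cdot\mid h_{u_i})$. Under removal, agent $j$ instead observes $o_j'=h(x,T_u\setminus\{\action_{u_i}\})$.

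For part (i), I will compare the two laws $P(o_j\mid i\ \mathrm{present})$ and $P(o_j\mid\mathrm{remove}\ i)$ by pushing forward through the deterministic map $h$. Under removal, every realized history lacks agent $i$'s message. Under presence, with probability one the history contains a nonempty message from agent $i$. The supports are therefore disjoint, and the two distributions differ whenever $h$ is injective in the appended message, which holds for concatenation. In the weaker case where the message could be empty, I instead use the nondegeneracy assumption that $\pib^i$ places mass on nonempty messages. Together these establish $P(o_j\mid\mathrm{remove}\ i)\neq P(o_j\mid i\ \mathrm{present})$.

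For part (ii), I will write the removal estimator as $\hat\Delta_i=\bar R_{\text{present}}-\bar R_{\text{removed}}$. By the law of large numbers it converges to $\Delta_i=\E[R\mid i\ \mathrm{present}]-\E[R\mid \mathrm{remove}\ i]$. I then decompose $\Delta_i$ against the target $\E_{\action}[A(u_i)]$ (or against $A(u_i)$ at the realized action) as the target plus a bias term. The bias term is
\begin{equation*}
\beta_i=\sum_{j>i}\Bigl(\E_{o_j\sim P_{\text{present}}}[g_j(o_j)]-\E_{o_j\sim P_{\text{removed}}}[g_j(o_j)]\Bigr),
\end{equation*}
obtained by a telescoping hybrid argument over downstream agents. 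Here $g_j$ is the expected terminal reward given that agent $j$ acts on $o_j$ and the later agents continue under $\pib$. The bias $\beta_i$ is a difference of population expectations and does not depend on the sample size, so it survives any number of rollouts; the sampling error alone is $O(1/\sqrt{n})$. For the monotonicity claim, I bound each summand by
\begin{equation*}
\|g_j\|_\infty\cdot\mathrm{TV}(P_{\text{present}}(o_j),P_{\text{removed}}(o_j))\le \mathrm{TV}(\cdots)
\end{equation*}
for rewards in $[0,1]$. I take the total variation distance, or equivalently a mutual-information proxy $I(\action_{u_i};o_j)$ via Pinsker, as the measure of ``inter-agent observation dependence.''

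The main obstacle is that ``monotonically increasing'' is false pointwise, since signed biases can cancel. I would therefore state and prove monotonicity for the worst-case bias over bounded reward functions. This worst case equals the TV distance exactly, by the variational characterization, and that quantity is monotone in dependence by definition.

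For part (iii), I will argue that in a fully sequential protocol $o_j$ contains $\action_{u_i}$ verbatim. Then $I(\action_{u_i};o_j)=H(\action_{u_i})$, which is the maximum possible value by the data-processing inequality. Likewise the TV distance between the intact and removed observation laws reaches its maximal value $1$ by the disjoint-support argument from (i). This makes the worst-case bias bound tight: one can choose a reward that depends on whether agent $i$'s message is present, which attains it.
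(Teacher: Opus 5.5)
Your skeleton matches the paper's proof. Part (i) is its mixture-versus-atom argument: your deletion model is the paper's replacement by a default $a_\emptyset$ with $a_\emptyset$ taken to be the empty message, and your $\mathrm{TV}=1$ is the paper's $1-\pi_b^i(a_\emptyset\mid h_{u_i})$ when the empty message has probability zero. Part (ii) is the variational TV bound on a population-level gap, and part (iii) uses the fact that $o_j$ contains $a_{u_i}$ verbatim. Your point that monotonicity can hold only for the worst-case bias, not pointwise, is more careful than the paper, which only shows that the upper bound $\mathrm{TV}\cdot R_{\max}$ grows.

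The step that fails is your telescoping formula for $\beta_i$. Take $g_j$ to be the continuation value at agent $j$'s decision point. Downstream agents run the same $\pi_b$ on whatever history they receive in either world. So the tower property gives $\E_{P_{\mathrm{present}}}[g_j(o_j)]=\E[R\mid i\ \mathrm{present}]$ and $\E_{P_{\mathrm{removed}}}[g_j(o_j)]=\E[R\mid \mathrm{remove}\ i]$ for \emph{every} $j$ acting after $u_i$. Each summand therefore already equals $\Delta_i$, and $\beta_i$ is $\Delta_i$ times the number of downstream agents; in Trio with the Reasoner removed it is twice the true bias. Your identity ``$\Delta_i=\text{target}+\beta_i$'' is false as soon as more than one agent acts after $u_i$.

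No hybrid argument is needed. Since $\E_{a\sim\pi_b}[A(u_i)]=0$, the bias is exactly $\Delta_i$; at a realized action $a$ the removal estimate and the target share the term $Q^{D_b}(h_{u_i},a)$, so their difference is again $\Delta_i$. Write $\Delta_i=\E_{P_{\mathrm{present}}}[g(o)]-\E_{P_{\mathrm{removed}}}[g(o)]$ for the single value function $g$ at the first downstream decision point; this is the paper's one-term decomposition over $o_j$. Sample-size independence and $\sup_{R\in[0,1]}|\Delta_i|=\mathrm{TV}$ then follow as you intended. In full-history protocols this TV is the same whichever downstream observation you pick, since later observations come from a common kernel applied to the first and contain it as a prefix. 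That is another way to see why summing over $j$ overcounts.

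Separately, drop ``or equivalently a mutual-information proxy.'' Pinsker relates $I(a_{u_i};o_j)$ to the distance between $P(o_j\mid a_{u_i})$ and the \emph{present-world} marginal, not to the distance between the present and removed laws. A $\pi_b^i$ concentrated on one nonempty message has $I=0$, yet under deletion the supports are disjoint, $\mathrm{TV}=1$, and the worst-case bias is maximal. So $I=H(a_{u_i})$ in (iii) is true but does no work; the TV computation is what proves (iii). The ``dependence'' in (ii) must accordingly be read as TV-type sensitivity of $o_j$ to the removal, not as mutual information.
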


\paragraph{Setup.}
Consider a $\ndecisions$-step sequential protocol with $\nagents$ agents. Under normal execution, agent $j$ at decision point $u$ observes history $\huvar = (x, \action_1, \ldots, \action_{u-1})$, where $x$ is the task instance and $\action_1, \ldots, \action_{u-1}$ are all preceding actions. Let $i$ denote the agent to be removed, acting at decision point $u_i < u$. We assume:
\begin{enumerate}[label=(A\arabic*)]
\item \label{ass:nondegen} The behavior policy is non-degenerate: $\pib^i(\action_\emptyset \mid h_{u_i}) < 1$, where $\action_\emptyset$ is the default replacement action under removal.
\item \label{ass:depend} Agent $j$'s observation depends non-trivially on $\action_{u_i}$: there exists $a' \neq \action_\emptyset$ in the support of $\pib^i(\cdot \mid h_{u_i})$ such that $h_u(a') \neq h_u(\action_\emptyset)$.
\end{enumerate}

\paragraph{Proof of (i).}
Under normal operation, the conditional distribution of agent $j$'s observation at decision point $u > u_i$ is:
\[
P(o_j \mid i\;\text{present}) = P\bigl(h_u \mid x,\, \action_{u_i} \sim \pib^i(\cdot \mid h_{u_i})\bigr).
\]
When agent $i$ is removed, its action $\action_{u_i}$ is replaced by $\action_\emptyset$. The resulting observation distribution becomes:
\[
P(o_j \mid \text{remove}\; i) = P\bigl(h_u \mid x,\, \action_{u_i} = \action_\emptyset\bigr).
\]
By \ref{ass:nondegen}, $\pib^i$ places positive probability on actions other than $\action_\emptyset$. By \ref{ass:depend}, at least one such action produces a different history than $\action_\emptyset$. The distribution $P(o_j \mid i\;\text{present})$ is therefore a mixture over histories indexed by the support of $\pib^i$, while $P(o_j \mid \text{remove}\; i)$ is a point mass on the history produced by $\action_\emptyset$. A mixture over distinct atoms cannot equal a single atom, so the two distributions differ. \qed

\paragraph{Proof of (ii).}
Define the advantage estimation bias under agent removal as:
\[
\text{Bias}(i) \;=\; \E_{\text{remove}\; i}\bigl[\Rterm\bigr] - \E\bigl[\Rterm \mid h_{u_i},\, \action_{u_i}\bigr].
\]
This bias arises from the distributional mismatch established in~(i), not from finite-sample noise. Decomposing over downstream observations:
\begin{align}
\text{Bias}(i) &= \sum_{o_j} \bigl[P(o_j \mid \text{remove}\; i) - P(o_j \mid i\;\text{present})\bigr] \cdot \E[\Rterm \mid o_j]. \label{eq:bias_decomp}
\end{align}
Let $M$ denote the total number of rollout samples. Increasing $M \to \infty$ reduces the variance of $\E[\Rterm \mid o_j]$ estimates but does not affect the distributional mismatch in~\eqref{eq:bias_decomp}, which is a property of the distributions themselves. Applying the variational characterization of total variation distance:
\[
|\text{Bias}(i)| \;\leq\; \mathrm{TV}\bigl(P(o_j \mid \text{remove}\; i),\; P(o_j \mid i\;\text{present})\bigr) \cdot R_{\max},
\]
where $R_{\max} = \sup_\tau R(\tau)$ is the maximum terminal reward and $\mathrm{TV}(P, Q) = \frac{1}{2}\sum_x |P(x) - Q(x)|$. Since the TV distance between the removal distribution and the present distribution increases as the observation becomes more sensitive to $\action_{u_i}$, the bias magnitude grows with inter-agent observation dependence. \qed

\paragraph{Proof of (iii).}
In a sequential protocol where agent $j$ directly observes agent $i$'s output (i.e., $\action_{u_i}$ appears literally in $o_j$), each distinct action produces a distinct observation. Under removal, the observation is deterministically the version containing $\action_\emptyset$. Under normal operation, the observation contains $\action_\emptyset$ with probability $\pib^i(\action_\emptyset \mid h_{u_i})$ and a different value otherwise. The total variation distance is:
\[
\mathrm{TV}\bigl(P(o_j \mid \text{remove}\; i),\; P(o_j \mid i\;\text{present})\bigr) = 1 - \pib^i(\action_\emptyset \mid h_{u_i}).
\]
By \ref{ass:nondegen}, this approaches 1 for non-degenerate policies, yielding maximal bias. Protocols where agents share only aggregate statistics (e.g., majority vote outcomes) have strictly smaller TV distance and therefore strictly smaller bias. \qed

\subsection{Unbiasedness of LOO Advantage}
\label{app:prop1}

We restate and prove \cref{prop:unbiased} from \cref{sec:loo}.

\begin{proposition}[Unbiasedness, restated]
Under the continuation distribution $\Db$, the LOO advantage $\Ahat_j$ is an unbiased estimate of $A^*_{\Db}(\huvar, \action_j) = \QDb(\huvar, \action_j) - \VDb(\huvar)$.
\end{proposition}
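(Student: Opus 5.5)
The argument is linearity of expectation applied twice, once per term of $\Ahat_j = \hat{Q}^{\Db}(\huvar,\action_j) - \bloo$. The randomness comes from two sources: the $\fanout$ alternatives $\action_1,\dots,\action_\fanout$, drawn i.i.d.\ from $\pib^v(\cdot\mid\huvar)$ at a fixed history, and the rollouts, drawn independently from each restored checkpoint $\checkpoint$ under $\Db$. I would condition on the history $\huvar$ and treat the rollout counts $c_1,\dots,c_\fanout$ as fixed, meaning chosen independently of the sampled actions and of the rewards.

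\textbf{Step 1 (own term).} Condition on $\action_j$. Each rollout $R_m$ is a draw of $\Rterm$ from the continuation distribution after restoring $\checkpoint$ and injecting $\action_j$. By the deterministic-history property (\cref{sec:posg}), the restored context is exactly $(\huvar,\action_j)$. Hence $\E[R_m\mid \huvar,\action_j]=\QDb(\huvar,\action_j)$, and by \cref{eq:qdb}, $\E[\hat{Q}^{\Db}(\huvar,\action_j)\mid\huvar,\action_j]=\QDb(\huvar,\action_j)$.

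\textbf{Step 2 (baseline).} For each $k\neq j$, first take the expectation over rollouts and then over $\action_k\sim\pib$. This gives $\E[\hat{Q}^{\Db}(\huvar,\action_k)\mid\huvar]=\E_{\action'\sim\pib}[\QDb(\huvar,\action')]=\VDb(\huvar)$, by the definition in \cref{eq:target_adv}. The baseline $\bloo$ in \cref{eq:bloo} is a convex combination of these terms with deterministic weights $c_k/(C-c_j)$, so $\E[\bloo\mid\huvar]=\VDb(\huvar)$.

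The LOO exclusion is what makes the baseline independent of $\action_j$. The alternatives are sampled independently, and rollouts from different checkpoints use independent randomness, so $\bloo$ is independent of $\action_j$ and of $\action_j$'s own rollouts. Therefore $\E[\bloo\mid\huvar,\action_j]=\VDb(\huvar)$ as well. Combining the two steps gives $\E[\Ahat_j\mid\huvar,\action_j]=\QDb(\huvar,\action_j)-\VDb(\huvar)=\Astar(\huvar,\action_j)$, which is the claim. Taking the outer expectation over $\action_j\sim\pib$ then shows the estimator has mean zero, as it should.

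\textbf{Main obstacle.} The independence in Step 2 is where care is needed. If the baseline included $\action_j$'s own outcomes, it would correlate with $\hat{Q}^{\Db}(\huvar,\action_j)$ and add a bias of order $c_j/C$ times $\Astar(\huvar,\action_j)$. I would note this contrast explicitly to justify the leave-one-out construction.

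I would also spell out the standing assumptions. The $c_k$ must not depend on the observed rewards, and $\fanout\ge 2$ is needed so that $C-c_j>0$. Rollouts must share no common random numbers across different actions; any coupling used in \cref{app:crn} would affect the variance but not the mean, since each term's expectation is unchanged. Finally, the result concerns only the $\Db$-advantage: unbiasedness with respect to $\pitheta$ is not claimed.
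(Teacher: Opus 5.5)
Your proposal is correct and follows essentially the same route as the paper's proof. Each $\hat{Q}^{\Db}(\huvar,\action_k)$ is unbiased given its action; an iterated expectation over rollouts and over the i.i.d.\ alternatives $\action_k$, $k\neq j$, with the non-adaptive weights $c_k/(C-c_j)$ passing through the expectation, then yields $\VDb(\huvar)$. Your extra remarks are accurate: the $-(c_j/C)\,\Astar(\huvar,\action_j)$ bias that would arise if the baseline included $\action_j$'s own outcomes, and the mean-zero check. As you half-note yourself, though, the mean only requires correct marginal continuation distributions, so forbidding shared random numbers across actions is stronger than needed.
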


\paragraph{Setup.}
Fix a decision point with history $\huvar$. A rollout group contains $\fanout$ actions $\action_1, \ldots, \action_\fanout$ sampled independently from $\pib(\cdot \mid \huvar)$. Each action $\action_k$ receives $c_k \geq 1$ independent rollouts under $\Db$, producing rewards $\{R_{k,1}, \ldots, R_{k,c_k}\}$. The rollout counts $c_1, \ldots, c_\fanout$ are fixed before observing any actions or rewards (non-adaptive allocation). Define:
\[
\hat{Q}^{\Db}(\huvar, \action_k) = \frac{1}{c_k} \sum_{m=1}^{c_k} R_{k,m}, \qquad C = \sum_{k=1}^\fanout c_k.
\]

\begin{proof}
We show $\E[\Ahat_j \mid \huvar, \action_j] = A^*_{\Db}(\huvar, \action_j)$, where the expectation is over rollout outcomes and the sampling of actions $\action_k$ for $k \neq j$.

\textit{Step 1: Action value estimate.}
Since each $R_{k,m}$ is drawn independently from the terminal reward distribution conditioned on $(\huvar, \action_k)$ with continuation under $\pib$, and rollouts for distinct actions are independent:
\[
\E\bigl[\hat{Q}^{\Db}(\huvar, \action_k) \mid \action_k\bigr] = \QDb(\huvar, \action_k).
\]

\textit{Step 2: LOO baseline expectation.}
The LOO baseline for action $j$ is $\bloo = \frac{1}{C - c_j} \sum_{k \neq j} c_k\, \hat{Q}^{\Db}(\huvar, \action_k)$. Taking expectation over rollout outcomes conditional on all actions:
\[
\E_{\text{rollouts}}\bigl[\bloo \mid \action_1, \ldots, \action_\fanout\bigr] = \frac{1}{C - c_j} \sum_{k \neq j} c_k\, \QDb(\huvar, \action_k).
\]
Now taking expectation over the independent sampling of $\action_k \sim \pib(\cdot \mid \huvar)$ for $k \neq j$. Since $c_k$ are non-adaptive (fixed independently of the sampled actions), the factor $c_k / (C - c_j)$ is a constant that passes through the expectation:
\begin{align}
\E_{\action_k, k \neq j}\Bigl[\frac{1}{C - c_j} \sum_{k \neq j} c_k\, \QDb(\huvar, \action_k)\Bigr]
&= \frac{1}{C - c_j} \sum_{k \neq j} c_k\, \E_{\action_k \sim \pib}\bigl[\QDb(\huvar, \action_k)\bigr] \nonumber \\
&= \frac{1}{C - c_j} \sum_{k \neq j} c_k\, \VDb(\huvar) \nonumber \\
&= \VDb(\huvar). \label{eq:baseline_exp}
\end{align}
The second equality uses the definition $\VDb(\huvar) = \E_{\action \sim \pib}[\QDb(\huvar, \action)]$.

\textit{Step 3: Combining.}
\begin{align*}
\E[\Ahat_j \mid \huvar, \action_j]
&= \E\bigl[\hat{Q}^{\Db}(\huvar, \action_j) \mid \action_j\bigr] - \E[\bloo \mid \action_j] \\
&= \QDb(\huvar, \action_j) - \VDb(\huvar) \\
&= A^*_{\Db}(\huvar, \action_j).
\end{align*}
\end{proof}

\subsection{Variance Reduction via Within-Group Comparison}
\label{app:crn}

\Cref{sec:diagnostics} introduces within-group variance as a diagnostic metric. Here we derive how the LOO baseline eliminates between-group variation and express the residual variance as a function of fanout $\fanout$ and rollout count $c_j$.

\paragraph{Decomposition.}
For a fixed history $\huvar$, the empirical action value admits the decomposition:
\[
\hat{Q}^{\Db}(\huvar, \action_j) = \underbrace{\VDb(\huvar)}_{m(\huvar)} + \underbrace{A^*_{\Db}(\huvar, \action_j)}_{\delta_j} + \underbrace{\hat{Q}^{\Db}(\huvar, \action_j) - \QDb(\huvar, \action_j)}_{\varepsilon_j},
\]
where $m(\huvar)$ is the history-level mean (constant within a rollout group), $\delta_j$ is the true action effect, and $\varepsilon_j$ is zero-mean rollout noise with $\text{Var}(\varepsilon_j \mid \action_j) = \sigma^2(\action_j) / c_j$.

\paragraph{Cancellation of between-group variation.}
The LOO advantage is:
\begin{align*}
\Ahat_j &= \hat{Q}^{\Db}(\huvar, \action_j) - \bloo \\
&= \bigl[m(\huvar) + \delta_j + \varepsilon_j\bigr] - \frac{1}{C - c_j}\sum_{k \neq j} c_k \bigl[m(\huvar) + \delta_k + \varepsilon_k\bigr] \\
&= \delta_j + \varepsilon_j - \frac{1}{C - c_j}\sum_{k \neq j} c_k (\delta_k + \varepsilon_k).
\end{align*}
The term $m(\huvar)$ cancels exactly because $\frac{1}{C - c_j}\sum_{k \neq j} c_k = 1$. This eliminates the dominant source of variance in multi-task training, where different task instances induce large variation in $\VDb(\huvar)$ across rollout groups.

\paragraph{Residual variance.}
Conditional on all actions (hence all $\delta_k$ fixed), the variance of $\Ahat_j$ arises solely from rollout noise. Since rollouts for different actions are independent:
\begin{align}
\text{Var}(\Ahat_j \mid \text{actions}) &= \text{Var}(\varepsilon_j) + \frac{1}{(C-c_j)^2} \sum_{k \neq j} c_k^2\, \text{Var}(\varepsilon_k) \nonumber \\
&= \frac{\sigma^2}{c_j} + \frac{1}{(C-c_j)^2} \sum_{k \neq j} c_k^2 \cdot \frac{\sigma^2}{c_k} \nonumber \\
&= \frac{\sigma^2}{c_j} + \frac{\sigma^2}{(C-c_j)^2} \sum_{k \neq j} c_k \nonumber \\
&= \frac{\sigma^2}{c_j} + \frac{\sigma^2}{C - c_j}, \label{eq:var_ahat}
\end{align}
assuming homogeneous noise variance $\sigma^2(\action_k) \approx \sigma^2$. Under heterogeneous variance, the expression generalizes to $\sigma^2(\action_j)/c_j + (C-c_j)^{-2}\sum_{k \neq j} c_k\, \sigma^2(\action_k)$.

\paragraph{Equal-allocation case.}
When $c_k = c$ for all $k$ (uniform budget allocation), $C = \fanout c$ and:
\[
\text{Var}(\Ahat_j \mid \text{actions}) = \frac{\sigma^2}{c} + \frac{\sigma^2}{(\fanout - 1)c} = \frac{\sigma^2 \cdot \fanout}{c(\fanout - 1)}.
\]
For fixed total budget $B = \fanout \cdot c$ (i.e., $c = B / \fanout$):
\[
\text{Var}(\Ahat_j \mid \text{actions}) = \frac{\sigma^2 \cdot \fanout^2}{B(\fanout - 1)}.
\]
The variance is minimized at $\fanout = 2$ (yielding $4\sigma^2 / B$) and increases monotonically for $\fanout > 2$, since $\frac{\partial}{\partial \fanout}\bigl[\fanout^2/(\fanout - 1)\bigr] = \fanout(\fanout - 2)/(\fanout - 1)^2 > 0$ for $\fanout > 2$. The tradeoff is that larger $\fanout$ improves the rank accuracy of advantage estimates (credit fidelity) at the cost of higher per-estimate variance, as validated empirically in \cref{sec:experiments}.

\subsection{Influence Estimation from Rollout Groups}
\label{app:influence_derivation}

\Cref{sec:diagnostics} defines inter-agent influence as the conditional mutual information $\influence$, where $J \in \{1, \ldots, \fanout\}$ indexes the injected action and $Y$ denotes the downstream agent's response under $\Db$. Here we derive a plug-in estimator from rollout group data.

\paragraph{Definition.}
For a rollout group at history $\huvar$ with $\fanout$ alternative actions, define $J$ as a random variable with distribution $P(J = k) = c_k / C$ (proportional to rollout allocation). The conditional mutual information is:
\[
I(J; Y \mid \huvar) = H(Y \mid \huvar) - H(Y \mid J, \huvar),
\]
where $H$ denotes Shannon entropy. When the downstream agent's response is deterministic given its full input (i.e., greedy decoding), $Y = f(J, \huvar)$ and $H(Y \mid J, \huvar) = 0$, so influence reduces to $H(Y \mid \huvar)$.

\paragraph{Discrete approximation.}
In practice, the terminal reward $R \in \{0, 1\}$ serves as a coarsened proxy for $Y$. The data processing inequality requires that $R$ depends on the upstream action $J$ only through the downstream response $Y$; formally, $J \to Y \to R$ forms a Markov chain conditioned on $\huvar$. This condition holds when the verifier scores only the final output without access to intermediate actions. Under this condition, $I(J; R \mid \huvar) \leq I(J; Y \mid \huvar)$, so the reward-based estimate provides a lower bound on true influence. For binary $R$:
\[
I(J; R \mid \huvar) = H(R \mid \huvar) - H(R \mid J, \huvar),
\]
where
\[
H(R \mid \huvar) = H\Bigl(\frac{1}{C}\sum_{k=1}^\fanout c_k \hat{Q}^{\Db}(\huvar, \action_k)\Bigr)
\]
with $H(p) = -p \log p - (1{-}p)\log(1{-}p)$, and
\[
H(R \mid J, \huvar) = \frac{1}{C}\sum_{k=1}^\fanout c_k\, H\bigl(\hat{Q}^{\Db}(\huvar, \action_k)\bigr).
\]

\paragraph{Plug-in estimator.}
Substituting empirical success rates $\hat{Q}^{\Db}(\huvar, \action_k)$ from the rollout group data:
\begin{equation}
\label{eq:influence_est}
\hat{I}(J; R \mid \huvar) = H\Bigl(\frac{1}{C}\sum_{k} c_k \hat{Q}_k\Bigr) - \frac{1}{C}\sum_{k} c_k\, H(\hat{Q}_k),
\end{equation}
where $\hat{Q}_k = \hat{Q}^{\Db}(\huvar, \action_k)$. This estimator reuses the rollout group data already collected for advantage estimation, requiring no additional sampling.

\paragraph{Finite-sample bias.}
The plug-in entropy estimator is negatively biased for small sample sizes~\citep{paninski2003estimation}. For binary outcomes with $c_k$ trials per action, the leading-order bias of each conditional entropy term is $-1/(2c_k)$, while the marginal entropy (estimated from the pooled $C$ observations) has bias $-1/(2C)$. Since $\hat{I} = \hat{H}(\text{marginal}) - \hat{H}(\text{conditional})$ and both terms are negatively biased, the mutual information estimator is positively biased (overestimates dependence). The leading-order residual is:
\[
\E[\hat{I}] - I \;\approx\; -\frac{1}{2C} + \frac{1}{C}\sum_{k=1}^\fanout c_k \cdot \frac{1}{2c_k} \;=\; \frac{\fanout - 1}{2C}.
\]
This positive bias means $\hat{I}$ may slightly overestimate $I(J; R \mid \huvar)$, but since the true quantity is itself a lower bound on $I(J; Y \mid \huvar)$ via the data processing inequality, the combined estimate remains informative. For the default configuration ($\fanout = 4$, $C = 8$), the bias is $3/16 \approx 0.19$, which the empirical validation in \cref{sec:experiments} confirms is conservative.

%% file: appendix/supplementary.tex
\section{Supplementary Experiments and Reproducibility}
\label{app:supplementary}

The main text establishes that the deterministic-history property of cooperative LLM interaction enables exact interventional credit assignment and exact verification of credit quality. The complete reproducibility materials and full experimental results validating these properties are organized as follows.
\Cref{app:notation} collects all notation in a single reference table.
\Cref{app:hyperparams} specifies hyperparameters and training configuration.
\Cref{app:protocol} gives the complete prompt templates for both protocol topologies.
\Crefrange{app:extended_2a}{app:compute} present unabridged experimental data in the order of the main text, following a zero-selection-bias principle: the main text reports aggregated summaries; the appendix reports every benchmark, model, and configuration without omission.

\subsection{Complete Algorithm}
\label{app:algorithm}

\begin{algorithm}[htbp]
\caption{C3.}
\label{alg:c3}
\begin{algorithmic}[1]
\Require Policy $\pitheta$, fanout $\fanout$, verification budget $B$, clip threshold $\epsilon$
\For{each training iteration}
    \State $\pib \leftarrow \pitheta$ \Comment{Freeze behavior policy}
    \State Execute reference trajectories under $\pib$; save checkpoints $\{\checkpoint\}$
    \For{each decision point $(v, \huvar, \checkpoint)$}
        \State Sample $\fanout$ actions: $\action_j \sim \pib^v(\cdot \mid \huvar)$ for $j = 1, \ldots, \fanout$
    \EndFor
    \For{each action $\action_j$ in each rollout group} \Comment{Budget: $\sum c_j = B$}
        \State Restore $\checkpoint$; inject $\action_j$; roll out under $\pib$ to terminal
        \State Record $R_m$ for $c_j$ independent rollouts; compute $\hat{Q}^{\Db}(\huvar, \action_j)$ via \cref{eq:qdb}
    \EndFor
    \For{each rollout group}
        \State Compute $\bloo$ via \cref{eq:bloo} and $\Ahat_j$ via \cref{eq:ahat} for all $j$
    \EndFor
    \State Update $\pitheta$ by maximizing $\cL(\theta)$ (\cref{eq:ppo}) over all $(\huvar, \action_j, \Ahat_j)$ pairs
\EndFor
\end{algorithmic}
\end{algorithm}

\subsection{Notation}
\label{app:notation}

\Cref{tab:notation} summarizes the notation used throughout the paper, enabling quick reference without paging back to the definition site. Symbols are grouped by the section in which they are first introduced.

\begin{table}[htbp]
\centering
\caption{Summary of notation.}
\label{tab:notation}
\small
\begin{tabular}{@{}lll@{}}
\toprule
\textbf{Symbol} & \textbf{Description} & \textbf{Introduced} \\
\midrule
\multicolumn{3}{@{}l}{\textit{Problem Setting (\cref{sec:problem})}} \\
$N$ & Number of agents & \S2 \\
$K$ & Number of decision points per episode & \S2 \\
$u$ & Decision point index & \S2 \\
$h_u$ & History (observable text) at decision point $u$ & \S2 \\
$a$ & Action (complete textual message) & \S2 \\
$\tau$ & Full episode trajectory & \S2 \\
$R(\tau)$ & Terminal reward (scalar) & \S2 \\
$\cS$ & State space & \S2 \\
$\cO$ & Observation space & \S2 \\
$\cA$ & Action space & \S2 \\
$T$ & Transition function & \S2 \\
$\checkpoint$ & Checkpoint (saved interaction state) at decision point $u$ & \S2 \\
\midrule
\multicolumn{3}{@{}l}{\textit{Method (\cref{sec:method})}} \\
$\pib$ & Behavior policy (frozen at each training iteration) & \S3 \\
$\pitheta$ & Current learnable policy & \S3 \\
$\Db$ & Continuation distribution under $\pib$ & \S3 \\
$\fanout$ & Fanout: number of alternative actions per rollout group & \S3 \\
$c_j$ & Number of Monte Carlo rollouts for action $j$ & \S3 \\
$\QDb(h_u, a)$ & Action value under continuation distribution $\Db$ & \S3 \\
$\VDb(h_u)$ & State value under continuation distribution $\Db$ & \S3 \\
$\Astar(h_u, a)$ & Target advantage under $\Db$ & \S3 \\
$\Ahat_j$ & LOO advantage estimate for action $j$ & \S3 \\
$\bloo$ & Leave-one-out baseline (mean return excluding action $j$) & \S3 \\
\midrule
\multicolumn{3}{@{}l}{\textit{Diagnostics (\cref{sec:diagnostics})}} \\
$\rho$ & Spearman rank correlation (credit fidelity) & \S4 \\
$\influence$ & Inter-agent influence (conditional mutual information) & \S4 \\
$J$ & Index of injected upstream action & \S4 \\
$Y$ & Downstream agent response & \S4 \\
$H(Y \mid h_u)$ & Conditional entropy of downstream response & \S4 \\
\midrule
\multicolumn{3}{@{}l}{\textit{Training (\cref{sec:method})}} \\
$r(\theta)$ & Importance ratio $\pitheta(a \mid h) / \pib(a \mid h)$ & \S3 \\
$\epsilon$ & PPO clipping parameter & \S3 \\
$\cL(\theta)$ & Policy gradient objective (clipped surrogate) & \S3 \\
$B$ & Evaluation budget (total verifier calls per instance) & \S3 \\
\bottomrule
\end{tabular}
\end{table}

\FloatBarrier
\subsection{Hyperparameters and Training Configuration}
\label{app:hyperparams}

The complete hyperparameter specification for all experiments in \cref{sec:experiments} is listed below. Exact values are provided so that practitioners can replicate results without guessing unreported settings.

\paragraph{Shared Configuration.}
All methods (C3, MAPPO, MAGRPO) share the following fixed settings to ensure a fair comparison:

\begin{table}[htbp]
\centering
\caption{Shared training hyperparameters.}
\label{tab:hyperparams_shared}
\small
\begin{tabular}{@{}ll@{}}
\toprule
\textbf{Parameter} & \textbf{Value} \\
\midrule
Base model (Qwen3 experiments) & \texttt{Qwen3-4B-Instruct-2507} \\
SFT initialization & \texttt{Qwen3-4B-Instruct-2507} (instruct model evaluated directly) \\
Learning rate (actor) & $1\!\times\!10^{-6}$ \\
Learning rate (critic, MAPPO only) & $5\!\times\!10^{-5}$ \\
Learning rate schedule & cosine, 3\% warmup ratio \\
Batch size (instances) & $256$ \\
Rollout batch size & $128$ \\
Micro-batch (train / rollout) & $64$ / $32$ \\
PPO epochs per batch & $5$ \\
PPO clip ratio $\epsilon$ & $0.2$ \\
KL penalty coefficient (init) & $0.01$ \\
KL target & $0.1$ \\
KL estimator & \texttt{k3} (DeepSeek-style) \\
Max sequence length & $3072$ effective ($2560$ prompt $+$ $512$ generation) \\
Optimizer & AdamW \\
Weight decay & $0.0$ \\
Gradient clipping norm & $1.0$ \\
Warmup steps & $3\%$ of total training steps \\
Precision & \texttt{bf16} \\
Evaluation budget $B$ & $8$ \\
Number of seeds & $5$ (seeds $0,1,2,3,4$) \\
Eval interval & every $5\%$ of training steps \\
Hardware & $8\!\times\!$NVIDIA A800-80GB PCIe (single node) \\
\bottomrule
\end{tabular}
\end{table}

\paragraph{C3-Specific Configuration.}

\begin{table}[htbp]
\centering
\caption{C3-specific hyperparameters.}
\label{tab:hyperparams_c3}
\small
\begin{tabular}{@{}lp{0.62\linewidth}@{}}
\toprule
\textbf{Parameter} & \textbf{Value} \\
\midrule
Rollout groups per instance & $2$ \\
Fanout $\fanout$ per group & $4$ \\
Rollouts per alternative $c_j$ & $1$ \\
Checkpoint strategy & Save full interaction state at each decision point $u$; restore by loading model states at $u$ for counterfactual rollouts. \\
Baseline mode & \texttt{loo} (leave-one-out) \\
Credit variant & \texttt{reward\_only} (no critic-based value subtraction) \\
\bottomrule
\end{tabular}
\end{table}

\paragraph{Baseline Implementation Details.}
MAPPO uses a centralized critic (\texttt{mmBERT-base}, separate model) that receives the concatenation of all agents' observations, with \texttt{mappo\_normalize\_scope=global}, \texttt{state\_max\_len=2560}, and a critic warmup of 64 steps. MAGRPO computes group-relative advantages with \texttt{baseline=group\_mean}, \texttt{adv\_unit=joint\_action}, \texttt{token\_normalize=True}, and $n_{\text{samples}}=8$. Both baselines use the same evaluation budget $B = 8$ as C3, allocated as 8 independent full-episode rollouts.

\paragraph{Hardware and Runtime.}
All experiments are conducted on a single node with $8\!\times\!$NVIDIA A800-80GB PCIe GPUs. Total training wall-clock time per method is reported in \cref{app:compute}.

\FloatBarrier
\subsection{Protocol Specification}
\label{app:protocol}

Complete prompt templates for the two interaction protocols evaluated in \cref{sec:experiments} are reproduced below, enabling exact reproduction of each multi-agent interaction and facilitating adaptation to new protocol designs.

\paragraph{Duo Protocol (2A: Reasoner $\to$ Actor).}
The Duo protocol assigns two roles to two separate model instances. The Reasoner receives the task input and produces a step-by-step solution plan. The Actor receives both the task input and the Reasoner's plan, then produces the final answer.

\begin{PromptBox}{Reasoner System Prompt (Duo)}
Two LLM agents (Reasoner $\to$ Actor) collaborate step-by-step to solve math problems. You are the \textbf{Reasoner}: Analyze the original problem, historical actions, and reflection data (if provided) to determine the critical next step. Guide the Actor by providing concise reasoning for the optimal operation.

\smallskip\noindent\textit{Configuration:} \texttt{with\_answer=false} (Reasoner does not emit a final answer).
\end{PromptBox}

\begin{PromptBox}{Actor System Prompt (Duo)}
Two LLM agents (Reasoner $\to$ Actor) collaborate step-by-step to solve math problems. You are the \textbf{Actor}: Execute operations using the original problem, action history, and the Reasoner's guidance. Provide the final answer within \texttt{\textbackslash boxed\{\}}.

\smallskip\noindent\textit{Configuration:} \texttt{with\_answer=true}; \texttt{depends\_on=[reasoner]}.
\end{PromptBox}

\paragraph{Trio Protocol (3A: Reasoner $\to$ Actor $\to$ Verifier).}
The Trio protocol extends the Duo protocol with a third agent. The Reasoner prompt is identical to the Duo version. The Actor prompt is modified to defer the final answer to the Verifier:

\begin{PromptBox}{Actor System Prompt (Trio)}
Three LLM agents (Reasoner $\to$ Actor $\to$ Verifier) collaborate step-by-step to solve math problems. You are the \textbf{Actor}: Produce a candidate step-by-step solution using the original problem, action history, and the Reasoner's guidance. Focus on correctness and leave the final checked answer to the Verifier.

\smallskip\noindent\textit{Configuration:} \texttt{with\_answer=false}; \texttt{depends\_on=[reasoner]}.
\end{PromptBox}

\begin{PromptBox}{Verifier System Prompt (Trio)}
Three LLM agents (Reasoner $\to$ Actor $\to$ Verifier) collaborate step-by-step to solve math problems. You are the \textbf{Verifier}: Review the Actor's candidate solution, cross-check it by alternative methods or by plugging the result back in, and then provide the final checked answer. Output the final answer within \texttt{\textbackslash boxed\{\}}.

\smallskip\noindent\textit{Configuration:} \texttt{with\_answer=true}; \texttt{depends\_on=[actor]}.
\end{PromptBox}

\paragraph{Context Serialization.}
At each decision point $u$, the input to the downstream agent is constructed by Python format-string substitution into the role's system prompt. The following placeholders are available:

\begin{PromptBox}{Context Serialization}
\texttt{\{question\}}\quad Original task input (math problem text).\\
\texttt{\{context\}}\quad \texttt{"\textbackslash n\textbackslash n"}-joined outputs of all upstream roles in topological order.\\
\texttt{\{<role>\}}\quad Direct access to a specific upstream role's output (e.g., \texttt{\{reasoner\}}, \texttt{\{actor\}}).

\medskip\noindent\textbf{Execution order:}\\
\textit{Duo (2A):} Reasoner $\leftarrow$ \texttt{\{question\}}; \quad Actor $\leftarrow$ \texttt{\{question\}} + \texttt{\{reasoner\}}.\\
\textit{Trio (3A):} Reasoner $\leftarrow$ \texttt{\{question\}}; \quad Actor $\leftarrow$ \texttt{\{question\}} + \texttt{\{reasoner\}}; \quad Verifier $\leftarrow$ \texttt{\{question\}} + \texttt{\{reasoner\}} + \texttt{\{actor\}}.

\medskip\noindent Missing keys are replaced with the empty string. Substitution occurs once per decision point; outputs are appended to \texttt{\{context\}} for downstream roles.
\end{PromptBox}

\FloatBarrier
\subsection{Extended Duo Results}
\label{app:extended_2a}

Full per-benchmark results and numerical summaries for the two-agent (Duo) topology are reported here, extending the aggregated results in \cref{tab:decomposition,fig:learning}. Individual benchmarks allow readers to verify that the aggregate trends are not dominated by a single task.

\paragraph{SFT Baseline Accuracy.}
The SFT checkpoint, which serves as the initialization for all RL methods, achieves the following per-benchmark greedy accuracy:

\begin{table}[htbp]
\centering
\caption{SFT baseline accuracy (greedy decoding) across all benchmarks.}
\label{tab:sft_baseline}
\small
\begin{tabular}{@{}lc@{}}
\toprule
\textbf{Benchmark} & \textbf{SFT Accuracy (\%)} \\
\midrule
MATH500 & 47.5\,$\pm$\,0.9 \\
AIME 2025 & 4.0\,$\pm$\,1.5 \\
CMATH & 92.8\,$\pm$\,0.2 \\
GSM8K & 91.1\,$\pm$\,0.3 \\
MBPP+ & 8.7\,$\pm$\,0.4 \\
MBPP-test & 5.9\,$\pm$\,0.4 \\
\bottomrule
\end{tabular}
\end{table}

\paragraph{Per-Benchmark Terminal Accuracy.}

\begin{figure}[htbp]
\centering
\includegraphics[width=\textwidth]{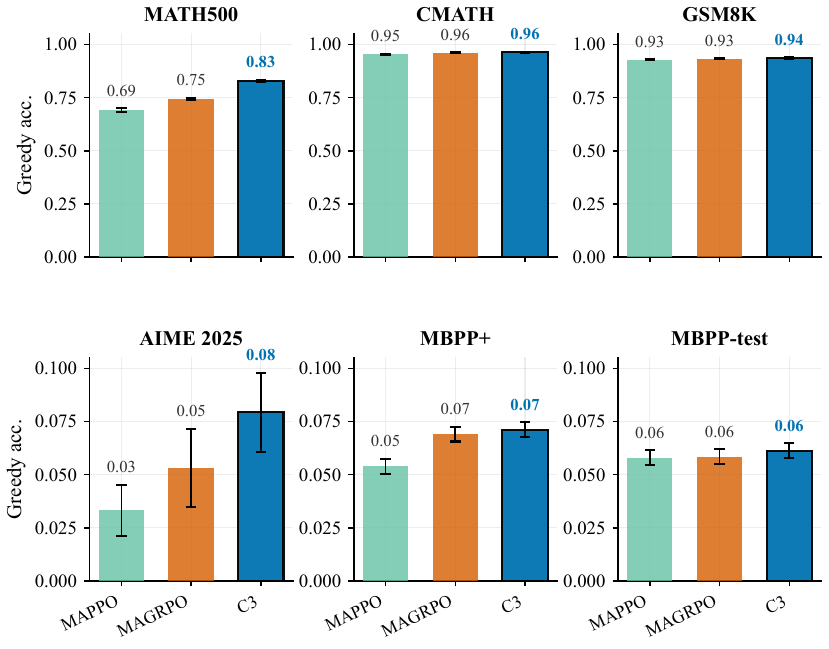}
\caption{Per-benchmark terminal accuracy for the Duo (2A) topology on Qwen3-4B. Each panel shows one benchmark; bars represent C3, MAPPO, and MAGRPO under matched evaluation budgets ($B=8$). Error bars: $\pm 1$ standard deviation over 5 seeds.}
\label{fig:extended_2a_bars}
\end{figure}

\paragraph{Full Numerical Summary.}

\begin{table}[htbp]
\centering
\caption{Full per-benchmark results for the Duo topology. All values are accuracy (\%). Mean $\pm$ std over 5 seeds; bold: best per benchmark. $\dagger$Avg.\ averages MATH500, CMATH, and GSM8K; AIME\,2025 (30\,problems) is reported separately.}
\label{tab:extended_2a}
\small
\begin{tabular}{@{}lccc@{}}
\toprule
\textbf{Benchmark} & \textbf{C3} & \textbf{MAPPO} & \textbf{MAGRPO} \\
\midrule
MATH500 & \textbf{82.8\,$\pm$\,0.6} & 69.3\,$\pm$\,0.9 & 74.5\,$\pm$\,0.5 \\
CMATH & \textbf{96.3\,$\pm$\,0.2} & 95.3\,$\pm$\,0.2 & 96.1\,$\pm$\,0.2 \\
GSM8K & \textbf{93.6\,$\pm$\,0.3} & 92.9\,$\pm$\,0.3 & 93.4\,$\pm$\,0.3 \\
AIME 2025 & \textbf{7.9\,$\pm$\,1.9} & 3.3\,$\pm$\,1.2 & 5.3\,$\pm$\,1.8 \\
\midrule
\textbf{Avg.$^{\dagger}$} & \textbf{90.9\,$\pm$\,0.4} & 85.8\,$\pm$\,0.3 & 88.0\,$\pm$\,0.1 \\
\midrule
MBPP+ & \textbf{7.1\,$\pm$\,0.4} & 5.4\,$\pm$\,0.4 & 6.9\,$\pm$\,0.4 \\
MBPP-test & \textbf{6.1\,$\pm$\,0.4} & 5.8\,$\pm$\,0.4 & 5.9\,$\pm$\,0.4 \\
\bottomrule
\end{tabular}
\end{table}

\FloatBarrier
\subsection{Extended Trio Results}
\label{app:extended_3a}

Full per-benchmark results and numerical summaries for the three-agent (Trio) topology are reported here, extending the aggregate results in \cref{sec:architecture_credit}. Individual results confirm whether the Trio advantage pattern mirrors the Duo topology or diverges on specific task types.

\begin{figure}[htbp]
\centering
\includegraphics[width=\textwidth]{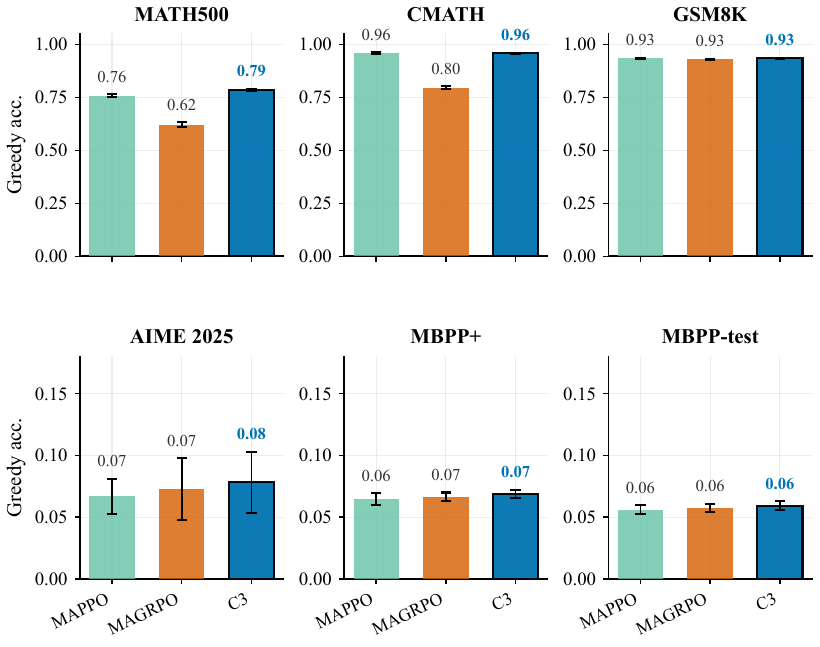}
\caption{Per-benchmark terminal accuracy for the Trio (3A) topology on Qwen3-4B. Each panel shows one benchmark; bars represent C3, MAPPO, and MAGRPO under matched evaluation budgets ($B=8$). Error bars: $\pm 1$ standard deviation over 5 seeds.}
\label{fig:extended_3a_bars}
\end{figure}

\begin{table}[htbp]
\centering
\caption{Full per-benchmark results for the Trio topology. All values are accuracy (\%). Mean $\pm$ std over 5 seeds. $\dagger$Avg.\ averages MATH500, CMATH, and GSM8K; AIME\,2025 (30\,problems) is reported separately.}
\label{tab:extended_3a}
\small
\begin{tabular}{@{}lccc@{}}
\toprule
\textbf{Benchmark} & \textbf{C3} & \textbf{MAPPO} & \textbf{MAGRPO} \\
\midrule
\multicolumn{4}{@{}l}{\textit{Qwen3-4B 3A}} \\
MATH500 & \textbf{78.6\,$\pm$\,0.5} & 75.8\,$\pm$\,0.7 & 62.1\,$\pm$\,1.3 \\
CMATH & \textbf{95.9\,$\pm$\,0.2} & 95.8\,$\pm$\,0.3 & 79.6\,$\pm$\,0.8 \\
GSM8K & \textbf{93.5\,$\pm$\,0.3} & 93.4\,$\pm$\,0.3 & 92.9\,$\pm$\,0.3 \\
AIME 2025 & \textbf{7.8\,$\pm$\,2.5} & 6.7\,$\pm$\,1.4 & 7.3\,$\pm$\,2.5 \\
\midrule
\textbf{Avg.$^{\dagger}$} & \textbf{89.3\,$\pm$\,0.4} & 88.3\,$\pm$\,0.3 & 78.2\,$\pm$\,0.5 \\
\midrule
MBPP+ & \textbf{6.9\,$\pm$\,0.4} & 6.4\,$\pm$\,0.5 & 6.6\,$\pm$\,0.4 \\
MBPP-test & \textbf{5.9\,$\pm$\,0.4} & 5.6\,$\pm$\,0.4 & 5.7\,$\pm$\,0.4 \\
\midrule
\multicolumn{4}{@{}l}{\textit{Qwen2.5-3B 3A}} \\
MATH500 & \textbf{62.4\,$\pm$\,1.0} & 59.8\,$\pm$\,1.5 & 49.7\,$\pm$\,2.0 \\
CMATH & \textbf{90.7\,$\pm$\,0.5} & 89.2\,$\pm$\,0.7 & 74.3\,$\pm$\,1.5 \\
GSM8K & \textbf{85.8\,$\pm$\,0.4} & 84.2\,$\pm$\,0.5 & 85.7\,$\pm$\,0.7 \\
AIME 2025 & \textbf{1.7\,$\pm$\,1.2} & 0.6\,$\pm$\,0.5 & 1.4\,$\pm$\,1.0 \\
\midrule
\textbf{Avg.$^{\dagger}$} & \textbf{79.6\,$\pm$\,0.7} & 77.7\,$\pm$\,1.0 & 69.9\,$\pm$\,1.5 \\
\midrule
MBPP+ & \textbf{6.7\,$\pm$\,0.4} & 5.4\,$\pm$\,0.5 & 6.2\,$\pm$\,0.5 \\
MBPP-test & \textbf{5.7\,$\pm$\,0.3} & 4.6\,$\pm$\,0.4 & 5.4\,$\pm$\,0.3 \\
\bottomrule
\end{tabular}
\end{table}

\FloatBarrier
\subsection{Per-Benchmark Diagnostic Metrics}
\label{app:diagnostics_perbench}

Per-benchmark breakdowns of the three diagnostic metrics reported in aggregate in \cref{fig:diagnostics} are presented below. Individual benchmarks guard against the possibility that the aggregate pattern is dominated by a single task and reveal whether the diagnostic signals are consistent across mathematical reasoning and code generation domains.

\begin{figure}[htbp]
\centering
\includegraphics[width=\textwidth]{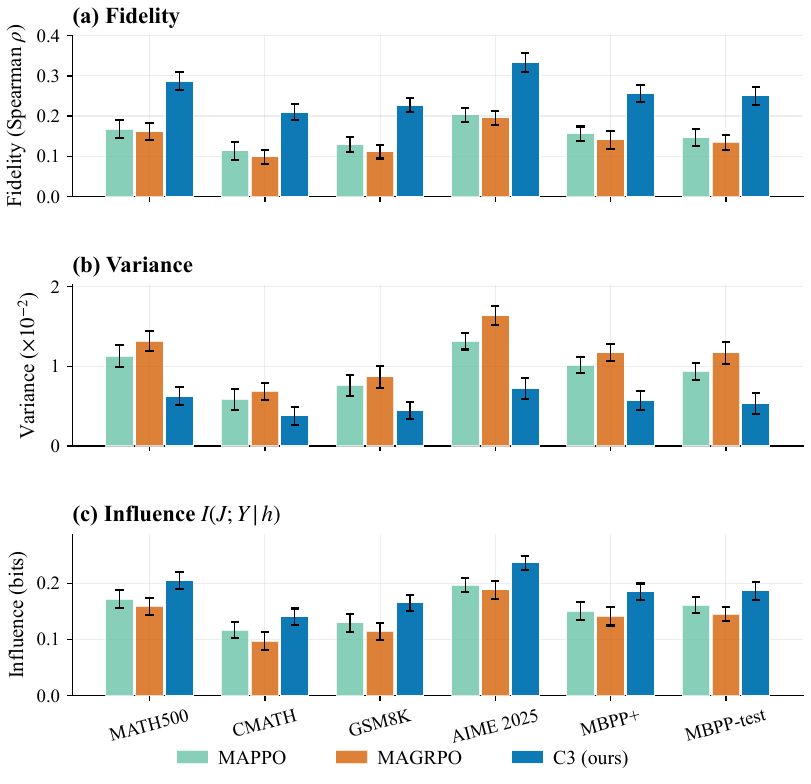}
\caption{Per-benchmark diagnostic metrics. Rows: (a)~credit fidelity ($\rho$), (b)~within-group variance ($\times 10^{-2}$), (c)~inter-agent influence $I(J; Y \mid \huvar)$ in bits. Each panel groups all six benchmarks with C3, MAPPO, and MAGRPO bars. Error bars: $\pm 1$ standard deviation over 5 seeds. C3 achieves the highest fidelity, lowest variance, and highest influence on every benchmark.}
\label{fig:diagnostics_perbench}
\end{figure}

\FloatBarrier
\subsection{Fanout Sensitivity}
\label{app:fanout}

CORY~\citep{zhang2024cory} assigns identical advantage to both agents. Under matched budget on MATH500, C3 achieves 82.8 versus CORY's 74.6 (+8.2\,pp, $>5\sigma$; \cref{tab:cory_fanout}a), quantifying the benefit of per-decision credit isolation.

\Cref{tab:cory_fanout}(b) varies the allocation within budget $B = 8$, and \cref{fig:fanout_bars} shows per-benchmark results for all configurations. The default $2 \times 4$ achieves the best average (90.9\%), balancing history diversity against within-group rank resolution. The $1\times8$ configuration limits Reasoner coverage to a single history; $4\times2$ limits rank resolution to $\fanout = 2$.

\begin{figure}[htbp]
\centering
\includegraphics[width=\textwidth]{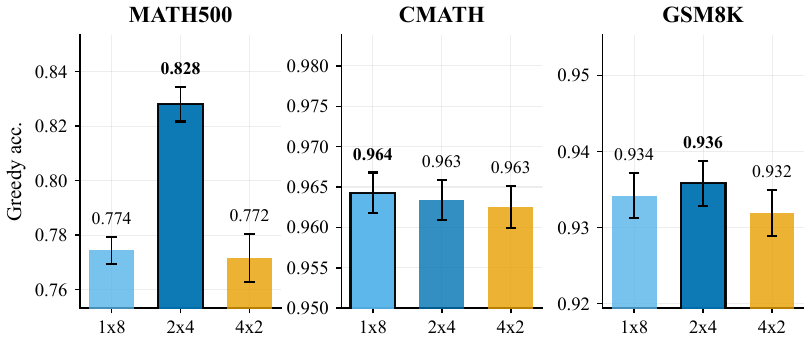}
\caption{Fanout sensitivity across three configurations ($1 \times 8$, $2 \times 4$, $4 \times 2$) under fixed budget $B = 8$. Each panel shows one benchmark; the argmax configuration is highlighted with a bold border. The default $2 \times 4$ achieves the best accuracy on MATH500 and GSM8K; differences on CMATH are within noise ($<0.2$\,pp).}
\label{fig:fanout_bars}
\end{figure}

\begin{table}[htbp]
\caption{(a)~C3 versus CORY under matched evaluation budget $B = 8$ on MATH500 with Qwen3-4B. (b)~Fanout sensitivity: allocation of the budget across Reasoner rollouts~(R) and Actor alternatives~(A). All values are accuracy (\%). Mean $\pm$ std over 5 seeds.}
\label{tab:cory_fanout}
\centering
\small
\begin{subtable}[t]{0.38\textwidth}
\centering
\caption{CORY comparison.}
\begin{tabular}{@{}lc@{}}
\toprule
Method & MATH500 \\
\midrule
CORY (2A) & 74.6\,$\pm$\,1.5 \\
\textbf{C3 (2A)} & \textbf{82.8\,$\pm$\,0.6} \\
\bottomrule
\end{tabular}
\end{subtable}
\hfill
\begin{subtable}[t]{0.58\textwidth}
\centering
\caption{Fanout sensitivity, $B = 8$.}
\adjustbox{max width=\textwidth}{%
\begin{tabular}{@{}lcccc@{}}
\toprule
Config & MATH500 & CMATH & GSM8K & Avg. \\
\midrule
1$\times$8 (R=1, A=8) & 77.4\,$\pm$\,0.5 & \textbf{96.4\,$\pm$\,0.2} & 93.4\,$\pm$\,0.3 & 89.1\,$\pm$\,0.1 \\
\textbf{2$\times$4 (R=2, A=4)} & \textbf{82.8\,$\pm$\,0.6} & 96.3\,$\pm$\,0.2 & \textbf{93.6\,$\pm$\,0.3} & \textbf{90.9\,$\pm$\,0.4} \\
4$\times$2 (R=4, A=2) & 77.2\,$\pm$\,0.9 & 96.2\,$\pm$\,0.3 & 93.2\,$\pm$\,0.3 & 88.9\,$\pm$\,0.3 \\
\bottomrule
\end{tabular}}
\end{subtable}
\end{table}

\FloatBarrier
\subsection{Ablation Details}
\label{app:ablation}

Per-benchmark ablation results, summarized in the main text (\cref{sec:experiments}), are reported in full below. Two components are ablated independently: fixed-history conditioning (w/o Fixed-History) and the LOO baseline (w/o LOO, replaced by a global mean baseline). Individual benchmarks reveal whether each component's contribution is consistent across tasks or concentrated in specific domains.

\begin{table}[htbp]
\centering
\caption{Per-benchmark ablation results (Qwen3-4B, Duo topology). All values are accuracy (\%). Mean $\pm$ std over 5 seeds. $\dagger$Avg.\ averages MATH500, CMATH, and GSM8K; AIME\,2025 (30\,problems) is reported separately.}
\label{tab:ablation_full}
\small
\begin{tabular}{@{}lccc@{}}
\toprule
\textbf{Benchmark} & \textbf{C3 (full)} & \textbf{w/o Fixed-History} & \textbf{w/o LOO} \\
\midrule
MATH500 & \textbf{82.8\,$\pm$\,0.6} & 71.0\,$\pm$\,0.5 & 79.0\,$\pm$\,0.5 \\
CMATH & \textbf{96.3\,$\pm$\,0.2} & 95.4\,$\pm$\,0.2 & 95.9\,$\pm$\,0.2 \\
GSM8K & \textbf{93.6\,$\pm$\,0.3} & 92.8\,$\pm$\,0.3 & 93.3\,$\pm$\,0.3 \\
AIME 2025 & \textbf{7.9\,$\pm$\,1.9} & 5.4\,$\pm$\,1.6 & 6.8\,$\pm$\,1.4 \\
\midrule
\textbf{Avg.$^{\dagger}$} & \textbf{90.9\,$\pm$\,0.4} & 86.4\,$\pm$\,0.4 & 89.4\,$\pm$\,0.2 \\
\midrule
MBPP+ & \textbf{7.1\,$\pm$\,0.4} & 4.9\,$\pm$\,0.3 & 5.8\,$\pm$\,0.3 \\
MBPP-test & \textbf{6.1\,$\pm$\,0.4} & 4.6\,$\pm$\,0.3 & 5.3\,$\pm$\,0.2 \\
\bottomrule
\end{tabular}
\end{table}

\FloatBarrier
\subsection{Compute Ledger}
\label{app:compute}

Complete token consumption and wall-clock statistics for all methods under matched evaluation budgets are recorded here. The decomposition enables practitioners to estimate resource requirements for their own deployments and identifies which cost component dominates.

\begin{table}[htbp]
\caption{Computational efficiency under matched evaluation budget $B = 8$. C3 achieves fewer training tokens than both MAPPO and MAGRPO by restarting from cached checkpoints rather than regenerating full transcript histories. Mean $\pm$ std over 5 seeds.}
\label{tab:compute}
\centering
\small
\begin{tabular}{@{}lcccc@{}}
\toprule
Method & Total Tokens (M) & vs MAPPO & Wall-Clock (h) & vs MAPPO \\
\midrule
MAPPO & 616.1 & N/A & 112.0 & N/A \\
MAGRPO & 703.1 & $+$14\% & 87.4 & $-$22\% \\
\textbf{C3} & \textbf{418.4} & \textbf{$-$32\%} & \textbf{76.6} & \textbf{$-$32\%} \\
\bottomrule
\end{tabular}
\end{table}

\begin{table}[htbp]
\centering
\caption{Compute ledger: total training tokens and wall-clock time per method (Qwen3-4B, Duo topology, single run).}
\label{tab:compute_full}
\small
\begin{tabular}{@{}lcccc@{}}
\toprule
\textbf{Method} & \textbf{Tokens (M)} & \textbf{Wall-Clock (h)} & \textbf{Token Ratio} & \textbf{Time Ratio} \\
\midrule
C3 & 418.4 & 76.6 & 1.00$\times$ & 1.00$\times$ \\
MAPPO & 616.1 & 112.0 & 1.47$\times$ & 1.46$\times$ \\
MAGRPO & 703.1 & 87.4 & 1.68$\times$ & 1.14$\times$ \\
\bottomrule
\end{tabular}
\end{table}

\paragraph{Cost Breakdown by Component.}
The token cost of each method decomposes into three components: forward passes for action generation, forward passes for rollout continuation, and reward evaluation calls.

\begin{table}[htbp]
\centering
\caption{Per-component token breakdown (millions).}
\label{tab:compute_breakdown}
\small
\begin{tabular}{@{}lccc@{}}
\toprule
\textbf{Method} & \textbf{Generation} & \textbf{Continuation} & \textbf{Evaluation} \\
\midrule
C3 & 125.5 & 272.0 & 20.9 \\
MAPPO & 184.8 & 400.5 & 30.8 \\
MAGRPO & 210.9 & 457.0 & 35.2 \\
\bottomrule
\end{tabular}
\end{table}

\FloatBarrier
\subsection{Failure Taxonomy}
\label{app:failure_taxonomy}

To characterize when and how credit assignment fails to translate into task performance, we manually reviewed $N = 200$ trajectories sampled uniformly across training steps from C3 training on Qwen3-4B, MATH500. Each trajectory was assigned to its dominant failure mode; categories are mutually exclusive.

\begin{table}[htbp]
\centering
\caption{Failure taxonomy based on manual review of $N = 200$ training trajectories (Qwen3-4B, MATH500). Categories are mutually exclusive; each trajectory is assigned to its dominant failure mode.}
\label{tab:taxonomy}
\small
\begin{tabular}{@{}p{0.50\textwidth}cc@{}}
\toprule
\textbf{Failure Category} & \textbf{Count} & \textbf{\%} \\
\midrule
Correct credit, improvement realized & 42 & 21.0 \\
Correct credit, no improvement (policy fails to compound signal) & 41 & 20.5 \\
Reasoner--Actor disagreement (Actor overrides correct plan) & 32 & 16.0 \\
Token-budget truncation (\texttt{gen\_max\_len} hit before \texttt{\textbackslash boxed\{\}}) & 24 & 12.0 \\
Spurious reward (numerically close but symbolically wrong match) & 19 & 9.5 \\
KL drift (KL ${>}\,2\times$ target despite correct LOO advantage) & 14 & 7.0 \\
Reasoner mode collapse (boilerplate output, no problem-specific reasoning) & 11 & 5.5 \\
Verifier rubber-stamp (Trio only; echoes Actor without cross-check) & 9 & 4.5 \\
Generalization failure (correct credit signal, downstream policy fails on novel inputs) & 8 & 4.0 \\
\midrule
\textbf{Total} & \textbf{200} & \textbf{100.0} \\
\bottomrule
\end{tabular}
\end{table}

Three patterns are worth highlighting. First, the dominant failure mode (16.0\% Reasoner--Actor disagreement) is independent of credit assignment quality: it reflects coordination drift that no per-decision credit signal can correct without explicit cross-agent regularization. Second, 12.0\% of failures are truncation artifacts recoverable by increasing the generation budget. Third, the 20.5\% ``correct credit, no improvement'' category, where C3 assigns positive advantage but the policy fails to compound it across iterations, is the most consequential open problem and motivates the process supervision integration discussed in \cref{sec:discussion}.